\documentclass{article}

\usepackage{microtype}
\usepackage{graphicx}
\usepackage{subfigure}
\usepackage{booktabs} 
\usepackage{multirow}
\usepackage[table,xcdraw]{xcolor}
\usepackage{hyperref}

\usepackage[preprint]{icml2026} 
\usepackage{lineno}
\usepackage{amsmath}
\usepackage{amssymb}
\usepackage{mathtools}
\usepackage{amsthm}

\usepackage[capitalize,noabbrev]{cleveref}

\theoremstyle{plain}
\newtheorem{theorem}{Theorem}[section]
\newtheorem{proposition}[theorem]{Proposition}

\theoremstyle{definition}

\theoremstyle{remark}

\usepackage[textsize=tiny]{todonotes}
\usepackage{algorithm}
\usepackage{algorithmic}

\icmltitlerunning{\textbf{TEFL:} Temporal Error Feedback Learning}

\begin{document}
\twocolumn[
\icmltitle{TEFL: Prediction-Residual-Guided Rolling Forecasting for Multi-Horizon Time Series}




\begin{icmlauthorlist}
\icmlauthor{Xiannan Huang}{tju}
\icmlauthor{Shen Fang}{zj}
\icmlauthor{Shuhan Qiu}{tju}
\icmlauthor{Chengcheng Yu}{tju}
\icmlauthor{Jiayuan Du}{tju_ee}
\icmlauthor{Chao Yang}{tju}
\end{icmlauthorlist}

\icmlaffiliation{tju}{Key Laboratory of Road and Traffic Engineering, Ministry of Education Tongji University, Shanghai China}
\icmlaffiliation{tju_ee}{The College of Computer Science, Tongji University, Shanghai China}
\icmlaffiliation{zj}{Research Center for Scientific Data Hub, Zhejiang Lab, Hangzhou, China}

\icmlcorrespondingauthor{Chao Yang}{tongjiyc@tongji.edu.cn}

\icmlkeywords{Machine Learning, ICML}

\vskip 0.3in
]



\printAffiliationsAndNotice{\icmlEqualContribution} 

\begin{abstract}

Time series forecasting plays a critical role in domains such as transportation, energy, and meteorology. Despite their success, modern deep forecasting models are typically trained to minimize point-wise prediction loss without leveraging the rich information contained in past prediction residuals from rolling forecasts—residuals that reflect persistent biases, unmodeled patterns, or evolving dynamics. We propose \textbf{TEFL} (Temporal Error Feedback Learning), a unified learning framework that explicitly incorporates these historical residuals into the forecasting pipeline during both training and evaluation. To make this practical in deep multi-step settings, we address three key challenges: (1) selecting observable multi-step residuals under the partial observability of rolling forecasts, (2) integrating them through a lightweight low-rank adapter to preserve efficiency and prevent overfitting, and (3) designing a two-stage training procedure that jointly optimizes the base forecaster and error module. Extensive experiments across 10 real-world datasets and 5 backbone architectures show that TEFL consistently improves accuracy, reducing MAE by 5–10\% on average. Moreover, it demonstrates strong robustness under abrupt changes and distribution shifts, with error reductions exceeding 10\% (up to 19.5\%) in challenging scenarios. By embedding residual-based feedback directly into the learning process, TEFL offers a simple, general, and effective enhancement to modern deep forecasting systems.
\end{abstract}

\section{Introduction}
Time series forecasting is of great significance in many fields such as finance, transportation, healthcare, meteorology, and energy \cite{gruca2023weather4cast,kadiyala2014vector,morid2023time}. In recent years, deep neural networks have become dominant in this domain \cite{liu2023itransformer,zhou2021informer}. These models are typically trained on randomly sampled consecutive segments: the past segment serves as input, and the future segment as target. 

However, this training protocol creates a critical mismatch with real-world deployment. In practice, forecasting systems operate in a \textit{rolling} manner: at each time step, predictions are made for the future, and ground truth for the current step becomes available shortly after. This sequential revelation of outcomes yields a stream of \textit{past prediction residuals}—a rich, causal signal that reflects the model’s recent performance and systematic biases.

Despite its availability and relevance, this residual signal is \textit{not used during training} in standard deep forecasting pipelines. Models are optimized to minimize point-wise loss on isolated segments, with no mechanism to learn how to interpret or leverage their own past errors for future correction. Consequently, even when residuals clearly indicate persistent over- or under-prediction, the model remains "blind" to this feedback during inference.

As a result, we propose TEFL (Temporal Error Feedback Learning), a general framework that explicitly incorporates past prediction residuals from rolling forecasts into the forecasting process—during both training and evaluation. Formally, the forecasts for time step $t$ are obtained as:
\begin{equation}
\hat{y}_t = f([y_{t-1}, \dots, y_{t-p}]) + g([\epsilon_{t-1}, \dots, \epsilon_{t-q}]),
\end{equation}
In this formulation, $y_t$ is the time series value at time $t$ , $p$ and $q$ denote the lengths of the observation window and the error feedback window, respectively. The functions $f$ and $g$ are parameterized as neural networks (e.g., Transformer or MLP).

Despite its simplicity, implementing this idea in modern deep forecasting faces three challenges:
\begin{itemize}
   \item \textbf{Error Selection}: In multi-step forecasting, the prediction error at each time step is a vector of length $k$, but only a prefix of it is observable at next time. We define a selection rule that identifies which errors can be reliably used as feedback signals, ensuring temporal consistency and full observability.
   \item \textbf{Integration Design}: To avoid interfering with the base forecaster, we design a decoupled correction module that adds a small adjustment signal—computed solely from historical errors—to the base prediction. This design ensures minimal computational overhead and compatibility with different model architectures.
   \item \textbf{Training the base model and the error-integration module}: 
Joint training is unstable because the error module relies on predictions from the base model, which may be inaccurate early in training. We address this with a two-stage strategy: first warm up the base model to produce reliable errors, then jointly fine-tune both components.
\end{itemize}

As a result, we propose \textbf{TEFL} (Temporal Error Feedback Learning), a general and lightweight framework that enables deep forecasters to learn from their own historical residuals. This design provides consistent improvements across diverse forecasting scenarios:
\begin{itemize}
    \item \textbf{Improved baseline accuracy}: On standard benchmark dataset, residual feedback corrects systematic biases, reducing MAE by 5–10\%.
    \item \textbf{Adaptation under distribution shifts}: When the data distribution changes (e.g., seasonal transitions), recent residuals automatically adjust forecasts without retraining.
    \item \textbf{Robustness to abrupt shocks}: Large errors from anomalies are directly propagated to subsequent predictions, mitigating error propagation.
\end{itemize}

We summarize the contributions of our work as follows.

\begin{enumerate}
\item We identify and formalize the training-deployment mismatch caused by ignoring rolling-residual feedback in deep time series forecasting.
\item We propose TEFL, a general framework with principled solutions to error observability, architectural integration, and training stability.
\item We demonstrate TEFL’s effectiveness and compatibility across a wide range of models and datasets.
\end{enumerate}
\section{Related work}
\subsection{Deep learning based time series prediction model}
In recent years, deep learning has become the dominant approach in time series forecasting, with various model architectures proposed to capture complex dependencies in sequential data. Transformer-based models, such as Informer \cite{zhou2021informer}, PatchTST \cite{nietime}, and iTransformer \cite{liu2023itransformer}, excel at modeling long-term temporal dependencies through attention mechanisms. They further improve efficiency and performance with techniques like sparse attention, series patching, and channel-independent embeddings \cite{zhou2021informer}. Meanwhile, CNN-based models (e.g., TimesNet) \cite{wu2022timesnet,huang2024leveraging} are effective at extracting local and periodic patterns, while MLP-based models (e.g., DLinear \cite{zeng2023transformers}, TimeMixer \cite{wang2024timemixer}) offer lightweight and efficient solutions with multi-scale processing. 

Regarding training strategies, traditional methods typically compute regression losses (e.g., MSE/MAE) uniformly over the whole sequence. Recently, researchers have started to explore more refined training paradigms. For example, curriculum learning \cite{wu2020connecting} gradually increases the forecasting horizon to adjust the training difficulty, while selective learning \cite{fu2025selective} goes further by identifying and masking time points in the target sequence that are less generalizable. Other work replaces the standard loss with alternatives such as Dynamic Time Warping \cite{cuturi2017soft}, frequency-domain loss \cite{wang2024fredf}, or distribution matching \cite{kudrat2025patch}, aiming to better align predicted and target sequences in terms of temporal or spectral shapes. However, a common limitation across these approaches is that they treat each training segment in isolation, without leveraging the natural feedback loop present in real-world rolling forecasting—where past prediction residuals become available sequentially and could inform future predictions. As a result, even when systematic biases emerge during evaluation, the model has no mechanism to learn from or correct them.

\subsection{Forecasting with historical prediction residuals}
A line of prior work has explored using historical prediction residuals to improve forecasting accuracy. While conceptually related to our approach, they differ fundamentally in assumptions, training protocol, and applicability to modern deep forecasting.

\textbf{(1) Two-stage residual modeling with shallow models}  
Early hybrid approaches follow a two-stage pipeline: first, a simple linear forecaster—such as an AR or ARIMA model—is fitted to the time series; then, a machine learning model (e.g., neural networks \cite{zhang2003time}, support vector machines \cite{nie2012hybrid}, or ensemble methods \cite{khashei2011novel}) is trained to model the residuals from the linear model. This decoupled training is effective only when the base model is low-capacity and not prone to overfitting, ensuring that training residuals reflect true error patterns. In contrast, when the base model is a high-capacity deep network, training-set residuals become unreliable due to overfitting, leading to poor generalization of the residual model. Moreover, these methods cannot handle multi-step forecasting, where only a prefix of past errors is observable. This limitation highlights the need for joint training and causally consistent error selection.

\textbf{(2) Statistical modeling under autocorrelated errors}  
In the statistics literature, several works study model estimation under autocorrelated errors—a setting where standard estimation assumptions break down. For example, \citet{xiao2003more} proposed a pre-whitening strategy; \citet{chen2015varying} studied varying-coefficient models under error autocorrelation; \citet{lei2016estimation} used B-splines with Whittle likelihood for joint  estimation; \citet{oliveira2021additive} combined penalized splines with autoregressive structures; and \citet{liu2022bspline} introduced B-spline-based generalized least squares for correlated errors. While some employ iterative or two-stage estimation strategies that loosely inspire our warm-up + fine-tuning design, they focus on parameter estimation rather than predictive performance, assume smooth functional forms, and do not support deep architectures or rolling multi-horizon forecasting.

\textbf{(3) Classical error-feedback mechanisms in time series prediction}
The use of past prediction errors for correction is not new—it lies at the heart of several foundational frameworks in statistical time series analysis. In the Box–Jenkins paradigm, significant autocorrelation in AR model residuals motivates the inclusion of moving-average terms, yielding ARMA models whose optimal predictors explicitly incorporate lagged residuals \citep{bartholomew1971time}. Similarly, the Kalman filter in linear state-space models updates its state estimate by feeding back the one-step-ahead innovation (i.e., prediction error) scaled by an optimal gain derived from the system’s noise structure \citep{durbin2012time}. These approaches provide principled, theoretically optimal solutions—but only under strong parametric assumptions (linearity, Gaussianity, known dynamics). Crucially, error feedback is embedded within the model specification, making it non-modular and inapplicable to black-box or deep forecasters. Our work revisits this classical insight as a universal post-hoc correction, decoupling the feedback mechanism from model internals while preserving its core benefit.
\section{Method}

\subsection{Selection of historical prediction errors}

In rolling multi-step forecasting, the model predicts a full horizon (e.g., 96 steps) at each time step. However, only the immediate next observation becomes available in real time. This means that, at time  $ t $ , the only prediction errors available for feedback are those from forecasts made at or before time  $ t - H $ , where  $ H $  is the prediction horizon—because only then has the full ground truth become observable.

To respect this causality constraint during training, we must select historical errors that would be genuinely available in a real rolling deployment. As illustrated in Figure \ref{fig:err}, consider a case where the model forecasts 4 steps ahead. At time $t=8$ , only the prediction errors marked by the triangular region have already been observed—and therefore, only those errors can be used for training.

Within this feasible set, our method adopts a simple yet effective strategy: we use the complete error vector from exactly one horizon ago, i.e., the errors produced at time  $ t - H $, whose predictions cover steps  $ t - H + 1 $  through  $ t $. This choice aligns with the natural rhythm of rolling forecasting—after issuing a  $ H $ -step forecast, the system waits  $ H $  steps to receive full feedback before making the next prediction. By mirroring this delay, our training protocol better simulates the temporal structure of real-world deployment.

Moreover, using the full error vector across all horizons provides richer supervisory signals than selecting only a subset (e.g., the first-step error), as it captures horizon-specific biases and correlations. While alternative selections are possible, our ablation study in Appendix \ref{secction:error_define} shows that the proposed delayed-full-horizon strategy consistently yields the best performance.
\begin{figure}
    \centering
    \includegraphics[width=0.95\linewidth]{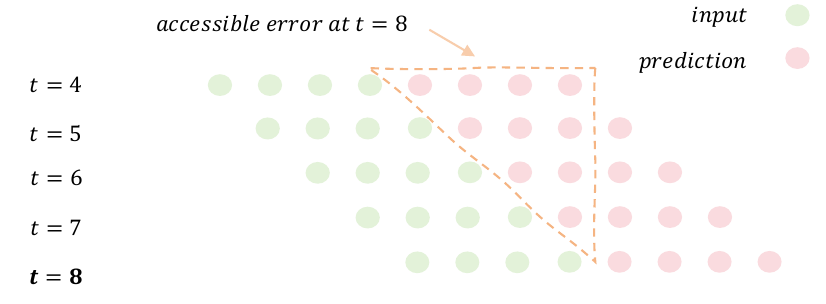}
    \caption{Example of historical errors}
    \label{fig:err}
\end{figure}
\subsection{The design of error module}

Given the selected error signals, we now turn to their integration into the forecasting process. In our framework, these errors are processed by a lightweight adapter and then added to the forecasts produced by the base model. This adapter must balance two requirements: sufficient expressive capacity and a compact parameter size to avoid overfitting. This design principle aligns with parameter-efficient fine-tuning methods, where a pre-trained base model is frozen and only a small adapter is updated.

Inspired by the Low-Rank Adaptation (LoRA) technique \cite{hu2022lora}, we implement the adapter as two low-rank projection matrices. Let the matrix of past prediction errors be denoted as $\epsilon\in \mathbb{R}^{d\times H}$ and the base model's forecast as $\hat{y}\in \mathbb{R}^{d \times H}$, where $d$ is the dimension of time series and $H$ is the prediction horizon.
This error $\epsilon$ is first projected into a lower-dimensional space and then mapped back to the original output dimension. The complete error module is formulated as:
\begin{equation}
    \hat{y}'= (ReLu(\epsilon\times W_1))W_2+\hat{y}
\end{equation}
where $\hat{y}'$ is the final, adjusted prediction and $W_1,W_2$ are two small matrices of shape $\mathbb{R}^{H\times r}$ and $R^{r\times H}$, respectively. And $r$ is much smaller than $H$. The use of two small matrices, $W_1$ and $W_2$, instead of some large weight matrices, drastically reduces the number of parameters.

Although more complex modules, such as gated networks or deeper architectures, could be considered, experiments discussed in Appendix \ref{section:err_model} indicate that these alternatives generally perform worse, with models containing more parameters often exhibiting poorer efficiency.
\begin{figure*}[!h]
    \centering
    \includegraphics[width=0.95\linewidth]{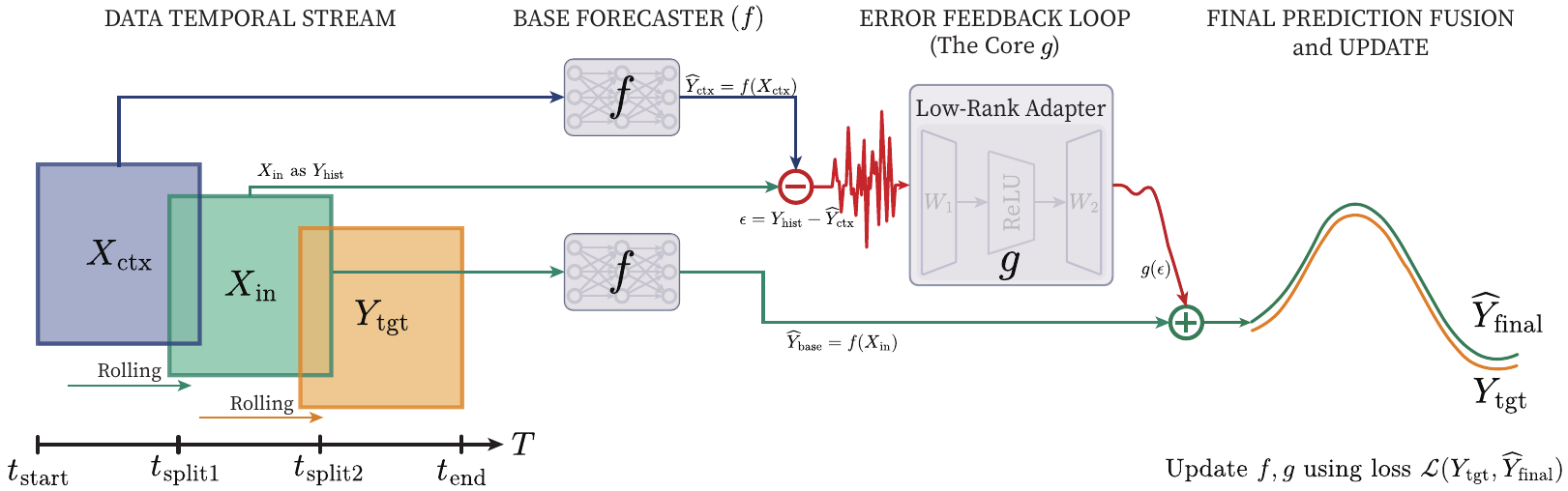}
    \caption{Illustration of TEFL joint training pipeline.}
    \label{fig:tefl}
\end{figure*}
\subsection{Training process}

Given that our framework introduces a learnable error correction module  $ g $  on top of the base forecaster  $ f $ , the choice of training strategy critically affects both performance and generalization. A naive approach would be to first train the base forecaster $f$ to convergence and then train the error module $g$ separately using its residuals. However, this two-stage pipeline risks a distributional mismatch: the error patterns observed during training may differ significantly from those encountered during rolling evaluation, especially if $f$ overfits to the shuffled training segments.

Conversely, jointly training $f$ and $g$ from scratch introduces another challenge: in early epochs, the predictions of $f$ are highly inaccurate, causing the inputs to $g$ (i.e., simulated residuals) to be dominated by noise. This hinders the error module’s ability to learn meaningful correction strategies.

To balance these concerns, we adopt a two-phase training protocol that ensures both signal quality and parameter co-adaptation.

\subsubsection{Phase 1: Warm-up with Temporally Structured Residuals}
We first warm up the base model $f$ using a composite loss that encourages its prediction residuals to exhibit temporal structure rather than white noise. The rationale is simple: if residuals are purely random, they carry no predictable signal for the error module; but if they retain systematic patterns (e.g., due to distribution shift or model bias), the error module can learn to correct them.

To quantify and promote such structure, we employ \textit{spectral flatness} (SF)—a differentiable measure of how "noise-like" a time series is \cite{dubnov2004generalization,gray2003spectral}. For a residual sequence $\epsilon = [\epsilon_0, \dots, \epsilon_{b-1}]$ over $b$ consecutive time steps, SF is computed as follows. First, apply the Discrete Fourier Transform (DFT):
\begin{equation}
    E_k = \sum_{n=0}^{b-1} \epsilon_n e^{-2\pi i k n / b}, \quad k = 0, \dots, b-1.
\end{equation}
Then compute the power spectrum $P_k = |E_k|^2$, and define spectral flatness as the ratio of its geometric mean to arithmetic mean:
\begin{equation}
    \mathrm{SF}(\epsilon) = \frac{\left( \prod_{k=0}^{b-1} P_k \right)^{1/b}}{\frac{1}{b} \sum_{k=0}^{b-1} P_k}.
\end{equation}
When the power spectrum is uniform (as in white noise), SF approaches 1; when energy is concentrated in a few frequencies (indicating structure or periodicity), SF approaches 0. By minimizing SF during warm-up, we encourage residuals to deviate from white noise and retain  temporal dependencies. Crucially, this requires training on temporally contiguous segments without shuffling, as spectral structure is only meaningful in the correct time order. Accordingly, our Phase 1 dataloader preserves the original sequence ordering.

The total warm-up loss for a batch of consecutive samples is:
\begin{equation}
    \mathcal{L}_{\text{warm}} = \ell(y, \hat{y}) + \alpha \cdot \mathrm{SF}(y - \hat{y}),
\end{equation}
where $\ell$ is the standard forecasting loss (e.g., MAE), and $\alpha > 0$ controls the regularization strength.

\subsubsection{Phase 2: Joint Training with Causal Error Simulation}
In the second phase, we jointly optimize the base model  $ f $  and the error module  $ g $ . For clarity, we assume a standard forecasting setting where both the input (lookback) window and the output (prediction) horizon have the same length  $ H $ —a common configuration in many time series benchmarks (e.g., predicting the next 96 steps given the past 96 steps).

Under this setting, to simulate the causal structure of rolling forecasting within a batch-compatible framework, each training sample is constructed from a contiguous segment of length  $ 3H $ , covering time steps  $ t-2H $  through  $ t+H-1 $ . This segment is partitioned into three consecutive blocks:
\begin{itemize}
    \item \textbf{Context window} ( $ X_{\text{ctx}} $ ):  $ [y_{t-2H}, \dots, y_{t-H-1}] $  — used as input to generate a forecast for the subsequent  $ H $  steps;
    \item \textbf{Input window} ( $ X_{\text{in}} $ ):  $ [y_{t-H}, \dots, y_{t-1}] $  — serves as the input for the current prediction;
    \item \textbf{Target} ( $ Y_{\text{tgt}} $ ):  $ [y_{t}, \dots, y_{t+H-1}] $  — the ground-truth values to be predicted.
\end{itemize}

The training procedure proceeds as follows:
\begin{enumerate}
    \item Apply the base model  $ f $  to  $ X_{\text{ctx}} $  to obtain a prediction  $ \hat{y}_{t-H:t-1} $  for time steps  $ t-H $  through  $ t-1 $ .
    \item Compute the residual vector  $ \epsilon = y_{t-H:t-1} - \hat{y}_{t-H:t-1} $ , where  $ y_{t-H:t-1} = [y_{t-H}, \dots, y_{t-1}] $  are the true observations. This error corresponds to a full  $ H $ -step forecast at time  $ t-H $ , and would be fully observable by time  $ t $  in a real rolling deployment.
    \item Apply  $ f $  to  $ X_{\text{in}} $  to produce the base forecast  $ \hat{y}_{t:t+H-1} $  for the target horizon.
    \item Pass  $ \epsilon $  through the error module  $ g $  and add its output as a correction:
    \begin{equation}
        \hat{y}'_{t:t+H-1} = f(X_{\text{in}}) + g(\epsilon).
    \end{equation}
    \item Compute the loss between  $ \hat{y}'_{t:t+H-1} $  and  $ Y_{\text{tgt}} = y_{t:t+H-1} $ , and backpropagate gradients through both  $ f $  and  $ g $ .
\end{enumerate}

This construction ensures two critical properties:
\begin{enumerate}
    \item \textbf{Causal validity}: The error signal  $ \epsilon $  is derived from a forecast whose entire ground truth is available at the time of the current prediction—respecting the temporal constraints of online evaluation.
    \item \textbf{Parameter consistency}: Since  $ \epsilon $  is computed using the current version of  $ f $, there is no mismatch between the error and the model state.
\end{enumerate}

Although our current implementation assumes equal input and output lengths ( $ H $ ), the core idea—using a fully observed error block for feedback—can be generalized to asymmetric settings with minor adjustments. The complete workflow is illustrated in Figure \ref{fig:tefl}. Ablation studies on the role of spectral flatness regularization, as well as comparisons with alternative training strategies are provided in Appendix \ref{section:method}.

\section{Theoretical Justification}
While our practical implementation employs highly nonlinear neural forecasters, a rigorous theoretical analysis of such models remains intractable. Nevertheless, the core mechanism behind TEFL can be rigorously demonstrated in an idealized setting: even an \textit{oracle} forecaster—one that computes the true conditional expectation—produces residuals with exploitable temporal structure, provided that (i) predictions are made using a finite input window (ii) the underlying system exhibits nonlinear state transitions, and (iii) observations are corrupted by additive noise. Under these conditions, observation noise obscures the true latent state, causing the optimal predictor to make systematic errors whose magnitude and sign depend on past noise realizations. As a result, the residual sequence  exhibits temporal correlations—not due to model misspecification, but as an inherent consequence of partial observability in nonlinear systems. Crucially, this structure can be leveraged for correction. As we show below, even a simple linear adapter can exploit these correlations to improve forecasting accuracy.

We analyze TEFL under a canonical nonlinear Gaussian state-space model:
\begin{align}
    x_t &= f(x_{t-1}) + \eta_{t-1}, \quad \eta_{t-1} \sim \mathcal{N}(0, \sigma_\eta^2), \\
    y_t &= x_t + \varepsilon_t, \quad \varepsilon_t \sim \mathcal{N}(0, \sigma_\varepsilon^2),
\end{align}
where  $ f \in C^2(\mathbb{R}) $ , the latent process  $ \{x_t\} $  is stationary with invariant distribution  $ \pi $ , and all noise variables are mutually independent. The forecaster observes only the noisy sequence  $ \{y_t\} $  and uses the MSE-optimal one-step predictor  $ g^*(y_{t-1}) = \mathbb{E}[y_t \mid y_{t-1}] $ .

Despite its optimality, the residual  $ r_t := y_t - g^*(y_{t-1}) $  exhibits temporal dependence due to observation noise. The following proposition quantifies this effect.

\begin{proposition}[Lag-1 Autocovariance of Residuals]
Consider the state-space model with observation noise ( $ \sigma_\varepsilon > 0 $ ) and nonlinear dynamics such that  $ \mu_f' := \mathbb{E}_{x \sim \pi}[f'(x)] \neq 0 $ . Then the prediction residuals of the optimal one-step forecaster satisfy

$$
\operatorname{Cov}(r_t, r_{t-1}) \neq 0.
$$

Thus, even an oracle predictor leaves structured, predictable residuals whenever observations are noisy and the dynamics are nonlinear—a common condition in real-world time series.
\end{proposition}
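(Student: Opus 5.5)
The plan is to compute $\operatorname{Cov}(r_t, r_{t-1})$ by exploiting the orthogonality property of the conditional expectation. This reduces the covariance to a single cross term, which we then evaluate approximately through a first-order (small-noise) expansion of the optimal predictor around the latent state. Since $g^*(y_{t-1}) = \mathbb{E}[y_t \mid y_{t-1}]$, we have $\mathbb{E}[r_t \mid y_{t-1}] = 0$, so $r_t$ is orthogonal to every function of $y_{t-1}$. Write $r_{t-1} = y_{t-1} - g^*(y_{t-2})$. The part $y_{t-1}$ is $\sigma(y_{t-1})$-measurable and therefore contributes nothing. This gives
\begin{equation*}
\operatorname{Cov}(r_t, r_{t-1}) = -\,\mathbb{E}\bigl[r_t\, g^*(y_{t-2})\bigr].
\end{equation*}
The essential point is that $g^*$ conditions only on $y_{t-1}$, a finite window, while $y_{t-2}$ still carries information about $x_{t-1}$ beyond what $y_{t-1}$ provides. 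This is because $y_{t-1} = x_{t-1}+\varepsilon_{t-1}$ is a noisy reading of the latent state. The task is therefore to show that $r_t$ is correlated with $y_{t-2}$, and in particular with $g^*(y_{t-2})$.

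We first decompose the residual: $r_t = f(x_{t-1}) - h(y_{t-1}) + \eta_{t-1} + \varepsilon_t$, where $h(y) := \mathbb{E}[f(x_{t-1}) \mid y_{t-1}=y]$. The terms $\eta_{t-1}$ and $\varepsilon_t$ are independent of $y_{t-2}$ and of $y_{t-1}$, so they drop out. Hence
\begin{equation*}
\operatorname{Cov}(r_t, r_{t-1}) = -\,\mathbb{E}\Bigl[\bigl(f(x_{t-1}) - h(y_{t-1})\bigr)\, g^*(y_{t-2})\Bigr].
\end{equation*}
Conditioning on $(y_{t-1}, y_{t-2})$, this becomes
\begin{equation*}
-\,\mathbb{E}\Bigl[\bigl(\mathbb{E}[f(x_{t-1})\mid y_{t-1},y_{t-2}] - \mathbb{E}[f(x_{t-1})\mid y_{t-1}]\bigr)\, g^*(y_{t-2})\Bigr].
\end{equation*}
This is the covariance between the information gain from the extra lag and $g^*(y_{t-2})$. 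It vanishes only in degenerate cases, for example when $\sigma_\varepsilon = 0$, in which case $y_{t-1}$ pins down $x_{t-1}$ exactly.

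To show that this quantity is nonzero, we use a linearization valid under the $C^2$ assumption on $f$. Expand $f(x_{t-1}) \approx f(m) + f'(m)(x_{t-1}-m)$ around the posterior mean $m$ of $x_{t-1}$, with remainder controlled by $\sup|f''|$ times the posterior variance. The information gain is then approximately $f'(\cdot)$ times the Kalman-type update of the posterior mean of $x_{t-1}$ when $y_{t-2}$ is added. In a Gaussian approximation of the joint law of $(x_{t-2}, x_{t-1}, y_{t-2}, y_{t-1})$, that update is a linear function of the innovation $y_{t-2} - \mathbb{E}[y_{t-2}\mid y_{t-1}]$. It carries a coefficient of order
\begin{equation*}
\frac{\sigma_\varepsilon^2\, \operatorname{Cov}(x_{t-1}, x_{t-2})}{\operatorname{Var}(y_{t-1})\,\operatorname{Var}(y_{t-2}\mid y_{t-1})},
\end{equation*}
where, to leading order, $\operatorname{Cov}(x_{t-1},x_{t-2}) \approx \mu_f' \operatorname{Var}(x)$. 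Pairing this with $g^*(y_{t-2}) \approx$ linear in $y_{t-2}$ with slope also proportional to $\mu_f'$ gives a leading term proportional to
\begin{equation*}
-\,\sigma_\varepsilon^2 (\mu_f')^2 \times (\text{positive variance factors}).
\end{equation*}
This is strictly negative whenever $\sigma_\varepsilon>0$ and $\mu_f'\neq 0$. As a sanity check, in the exactly linear case $f(x) = a x$, the observation process is ARMA(1,1), and the lag-one regression residual has lag-1 autocovariance that is explicitly computable and nonzero for $a\neq 0$ and $\sigma_\varepsilon > 0$.

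The main obstacle is the last step: making the approximation rigorous for general nonlinear $f$, since exact posteriors are non-Gaussian. My first attempt will be to argue perturbatively. I would treat the linearized term as the leading contribution and bound the remainder by $O(\sup|f''|\cdot \text{posterior variance})$, which requires a smallness condition on curvature relative to the noise. If that fails, the fallback is a weaker non-degeneracy argument by contradiction. If the covariance were zero, the information gain would have to be orthogonal to $g^*(y_{t-2})$. One would then show that this forces $f'$ to average to zero under $\pi$, using the fact that Gaussian convolution makes the relevant conditional expectations analytic and strictly monotone in $\mu_f'$. I expect the proof to establish the statement in this leading-order sense.
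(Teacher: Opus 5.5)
Your reduction is correct, and it is cleaner than the paper's bookkeeping. Orthogonality of $r_t$ to functions of $y_{t-1}$ gives the exact identity $\operatorname{Cov}(r_t,r_{t-1}) = -\mathbb{E}[\Delta\, g^*(y_{t-2})]$, with $\Delta = \mathbb{E}[f(x_{t-1})\mid y_{t-1},y_{t-2}] - \mathbb{E}[f(x_{t-1})\mid y_{t-1}]$. The gap is the step meant to show this is nonzero.

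Your ``Gaussian approximation of the joint law'' has no small parameter behind it. Since $\pi$ is not Gaussian, $\operatorname{Cov}(x_{t-1},x_{t-2}) = \operatorname{Cov}(f(x),x)$ is not $\mu_f'\operatorname{Var}(x)$; that is Stein's identity, which needs Gaussian $\pi$. Also, $g^*$ is not linear, and the factor $f'(\cdot)$ multiplying the Kalman-type update drops out of your final product. Your own sanity check exposes this. For $f(x)=ax$ with $s^2=\sigma_\eta^2/(1-a^2)$, the exact value is $\operatorname{Cov}(r_t,r_{t-1}) = -a^3 s^4\sigma_\varepsilon^2/(s^2+\sigma_\varepsilon^2)^2$: one factor of $a$ from $f'$, one from the gain, one from the slope of $g^*$. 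This is positive for $a<0$, so ``$-\sigma_\varepsilon^2(\mu_f')^2$ times positive factors, strictly negative'' is false. The perturbative route needs a curvature-smallness hypothesis the statement does not contain. The contradiction fallback has no mechanism, because the conditional expectations depend on all of $f$ and $\pi$, not monotonically on the scalar $\mu_f'$.

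What is missing is a controlled expansion parameter, and the paper supplies one. It sends $\sigma_\varepsilon\to 0$; its appendix version only claims non-vanishing for sufficiently small $\sigma_\varepsilon$. It uses $\mathbb{E}[f(x)\mid y] = f(y)+O(\sigma_\varepsilon^2)$ to write $r_t = -f'(y_{t-1})\varepsilon_{t-1}+\eta_{t-1}+\varepsilon_t+O_p(\sigma_\varepsilon^2)$, then pairs the $\varepsilon_{t-1}$ terms to get $-\mu_f'\sigma_\varepsilon^2$.

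Do not copy that constant. The $O_p(\sigma_\varepsilon^2)$ remainder of $r_t$ (e.g.\ $-\sigma_\varepsilon^2 f'(y_{t-1})\pi'(y_{t-1})/\pi(y_{t-1})$), paired with the $O(1)$ term $\eta_{t-2}$ in $r_{t-1}$, is also of order $\sigma_\varepsilon^2$. That omission is exactly why the linear case gives $-a^3\sigma_\varepsilon^2$ rather than $-a\sigma_\varepsilon^2$.

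Your identity is the better vehicle. Let $q$ be the predictive density of $x_{t-1}$ given $y_{t-2}$; the two posteriors of $x_{t-1}$ differ only in using $q$ versus $\pi$ as the prior. So to leading order $\Delta = \sigma_\varepsilon^2 f'(y_{t-1})\bigl[(\log q)'(y_{t-1}) - (\log\pi)'(y_{t-1})\bigr]$. As $\sigma_\varepsilon\to 0$, $(\log q)'(y_{t-1})\to-\eta_{t-2}/\sigma_\eta^2$ and $g^*(y_{t-2})\to f(x_{t-2})$. Stationarity gives $(\log\pi)'(x) = -\mathbb{E}[\eta_{t-2}\mid x_{t-1}=x]/\sigma_\eta^2$. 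Using $\eta_{t-2}=x_{t-1}-f(x_{t-2})$, this yields
\[
\operatorname{Cov}(r_t,r_{t-1}) = -\frac{\sigma_\varepsilon^2}{\sigma_\eta^2}\,\mathbb{E}\bigl[f'(x_{t-1})\operatorname{Var}\bigl(f(x_{t-2})\mid x_{t-1}\bigr)\bigr] + o(\sigma_\varepsilon^2),
\]
which reproduces $-a^3\sigma_\varepsilon^2$. This settles non-vanishing for small $\sigma_\varepsilon$ whenever this variance-weighted mean of $f'$ is nonzero, e.g.\ when $f'$ has constant sign and is not identically zero. When $f'$ changes sign, however, the weighted mean need not share the sign of $\mu_f'$. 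So $\mu_f'\neq 0$ is not the quantity to aim at, and your fallback, which tries to force $\mu_f'=0$, targets the wrong object.
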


This non-zero autocorrelation implies that using past residuals to predict future residuals can improve the predictions of $g^*$. We formalize this for a linear TEFL adapter:  $ \hat{y}_t = g^*(y_{t-1}) + \beta r_{t-1} $ , where  $ \beta $  is estimated from  $ T $  samples via ordinary least squares.

\begin{theorem}[Finite-Sample MSE Reduction]
Let $\gamma = \operatorname{Cov}(r_t, r_{t-1})$, $V = \operatorname{Var}(r_t)$, and assume $|\gamma| > 0$. Denote the empirical TEFL coefficient by $\hat{\beta}_T = \frac{\sum_{t=2}^{T} r_t r_{t-1}}{\sum_{t=2}^{T} r_{t-1}^2}$. Then, under some mild conditions, there exist constant $c > 0$ such that for any $\delta \in (0,1)$, with probability at least $1 - \delta$,
\begin{multline}
\frac{1}{T-1} \sum_{t=2}^{T} \big( y_t - g^*(y_{t-1}) - \hat{\beta}_T r_{t-1} \big)^2 \\
\leq
\frac{1}{T-1} \sum_{t=2}^{T} r_t^2
- \frac{\gamma^2}{V}
+
c\left( \sqrt{\frac{\log(1/\delta)}{T}}
+
\frac{(\log(1/\delta))^2}{T}\right)
\end{multline}
Hence, for sufficiently large $T$, TEFL strictly improves over the base predictor with high probability.
\end{theorem}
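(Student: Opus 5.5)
The argument rests on an exact algebraic identity for the OLS fit, followed by concentration of the empirical lag-0 and lag-1 moments. Write $S_0 = \frac{1}{T-1}\sum_{t=2}^T r_{t-1}^2$, $S_1 = \frac{1}{T-1}\sum_{t=2}^T r_t r_{t-1}$ and $Q = \frac{1}{T-1}\sum_{t=2}^T r_t^2$. Because $\hat\beta_T = S_1/S_0$ is the least-squares coefficient, expanding the square gives
\begin{equation*}
\frac{1}{T-1}\sum_{t=2}^T \big(r_t - \hat\beta_T r_{t-1}\big)^2 = Q - 2\hat\beta_T S_1 + \hat\beta_T^2 S_0 = Q - \frac{S_1^2}{S_0}.
\end{equation*}
This holds deterministically. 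It therefore suffices to show that, with probability at least $1-\delta$,
\begin{equation*}
\frac{S_1^2}{S_0} \ge \frac{\gamma^2}{V} - c\Big(\sqrt{\log(1/\delta)/T} + (\log(1/\delta))^2/T\Big).
\end{equation*}

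The second step is concentration of $S_0$ and $S_1$ around $V$ and $\gamma$. I would take the following as the ``mild conditions'':
\begin{enumerate}
\item $\{r_t\}$ is stationary. This is inherited from stationarity of $x_t$ and the fixed map $g^*$.
\item $\{r_t\}$ is geometrically $\beta$-mixing or $\alpha$-mixing. This is natural for an ergodic state-space chain.
\item $r_t$ is sub-Gaussian, or at least $r_t$ and $r_t r_{t-1}$ are sub-exponential, with uniformly bounded Orlicz norms. This follows if $f$ has at most linear growth, since the noise terms are Gaussian.
\end{enumerate}
The summands $r_{t-1}^2 - V$ and $r_t r_{t-1} - \gamma$ are then centered, sub-exponential and geometrically mixing. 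A Bernstein inequality for mixing sequences (Merlevède–Peligrad–Rio) gives
\begin{equation*}
|S_j - \mathbb{E}S_j| \le C\Big(\sqrt{\log(1/\delta)/T} + (\log(1/\delta))^2/T\Big)
\end{equation*}
with probability at least $1-\delta/2$, for each $j = 0, 1$. The $(\log)^2/T$ term is exactly the heavy-tail/mixing correction that appears in that inequality, which explains its form in the statement. By stationarity, $\mathbb{E}S_0 = V$ and $\mathbb{E}S_1 = \gamma$. A union bound over $j$ and a rescaling of constants combine the two events.

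The third step is a perturbation bound for the map $(s_0, s_1) \mapsto s_1^2/s_0$ near $(V, \gamma)$. Let $\Delta$ denote the deviation bound above.
\begin{enumerate}
\item If $\Delta \le V/2$, then $S_0 \ge V/2$. Since $|S_1| \le |\gamma| + \Delta$, the map is Lipschitz in this region with constant $L$ depending only on $V$ and $|\gamma|$ (note $|\gamma| \le V$). Hence $S_1^2/S_0 \ge \gamma^2/V - L\Delta$.
\item If $\Delta > V/2$, then $\Delta$ is bounded below by a constant. Enlarging $c$ so that $c\Delta \ge \gamma^2/V$ makes the claimed bound hold trivially, because $S_1^2/S_0 \ge 0$ always.
\end{enumerate}
Taking $c = \max(LC, \ldots)$ covers both cases. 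Strict improvement for large $T$ then follows because the error term vanishes while $\gamma^2/V > 0$.

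The main obstacle is the second step: justifying the mixing and tail conditions for $r_t = y_t - \mathbb{E}[y_t \mid y_{t-1}]$. This process is a nonlinear functional of a hidden Markov chain, not of a Markov chain in $y$. I would handle it by noting that $r_t$ is a measurable function of $(x_{t-1}, x_t, \varepsilon_{t-1}, \varepsilon_t)$. Its mixing coefficients are therefore bounded by those of the joint chain $(x_t, \varepsilon_t)$ shifted by one step. That chain is geometrically ergodic under a drift condition on $f$, for example $|f(x)| \le a|x| + b$ with $a < 1$. Growth control on $f$, via Gaussian smoothing, also gives the sub-Gaussian tails of $g^*$ and hence of $r_t$. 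I would list these as the ``mild conditions'' explicitly.
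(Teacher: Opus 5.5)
Your proof is correct. Its probabilistic core matches the paper's: $r_t$ is a measurable function of the joint chain $(x_t,\varepsilon_t)$ at two consecutive times, hence geometrically $\beta$-mixing, and sub-Gaussian tails together with the Merlev\`ede--Peligrad--Rio Bernstein inequality give the deviation bound $\Delta$ for the empirical moments.

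Where you differ is the algebraic reduction. The paper expands around the population coefficient $\beta^*=\gamma/V$ and splits the TEFL loss into three pieces: a term $(\mathrm{A})$ evaluated at $\beta^*$, an estimation term $(\hat\beta_T-\beta^*)^2(\mathrm{B})$, and a cross term $(\mathrm{C})$. It then bounds $|\hat\beta_T-\beta^*|\le C_0\Delta_T$ and concentrates $(\mathrm{A})$ around $V(1-\rho_1^2)$. It must also concentrate $\frac{1}{T-1}\sum_t r_t^2$ to trade $V$ back for the empirical base MSE, absorbing a $\Delta_T^2$ remainder. Finally, it only treats $T$ large enough that $\Delta_T<V/2$.

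Your least-squares identity $\frac{1}{T-1}\sum_t(r_t-\hat\beta_T r_{t-1})^2=Q-S_1^2/S_0$ makes the base MSE cancel exactly. You therefore only need concentration of $S_0$ and $S_1$ plus a Lipschitz bound for $(s_0,s_1)\mapsto s_1^2/s_0$. Your second case, using $S_1^2/S_0\ge 0$, yields the inequality for every $T$, which is what the statement actually claims.

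In exchange, the paper's decomposition separates the population gain $V(1-\rho_1^2)$ from the estimation error of $\hat\beta_T$. That is the form that carries over to out-of-sample risk at a fresh time point, which the paper remarks on after the proof. Your identity relies on $\hat\beta_T$ being the exact in-sample least-squares minimizer, so it does not transfer to held-out evaluation.

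One caveat applies equally to the paper. The $(\log(1/\delta))^2/T$ term is not exactly what the Merlev\`ede--Peligrad--Rio inequality delivers. Its first term carries a factor $n$, so with exponent $1/2$ (geometric mixing, sub-exponential summands) inversion gives $(\log(T/\delta))^2/T$. Since $(\log T)^2/T\lesssim T^{-1/2}$, this extra factor is absorbed into $c\sqrt{\log(1/\delta)/T}$ once $\delta\le 1/e$, as is the $\log 2$ from your union bound. That is enough for the conclusion.
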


These results establish that residual feedback is theoretically justified even under \emph{idealized conditions}: stationary dynamics, no distribution shift, and an oracle base predictor. Moreover, this performance gain can be captured using only a \emph{linear} adapter. In practical settings—where models are misspecified, training data suffer from distribution shifts, or the base forecaster is suboptimal—the residual signal is likely to contain \emph{even richer} information about systematic errors. Consequently, more expressive adapters (e.g., neural networks) might yield \emph{greater} improvements in such realistic, non-stationary regimes. Our analysis focuses on one-step history for clarity, but the presence of residual autocorrelation persists under multi-step conditioning in nonlinear or partially observed systems—a regime that covers most real-world forecasting tasks. Full proofs and additional discussion are provided in Appendix \ref{section:theo}. 

\section{Experiment}

\begin{figure*}[!h]
    \centering
    \includegraphics[width=0.95\linewidth]{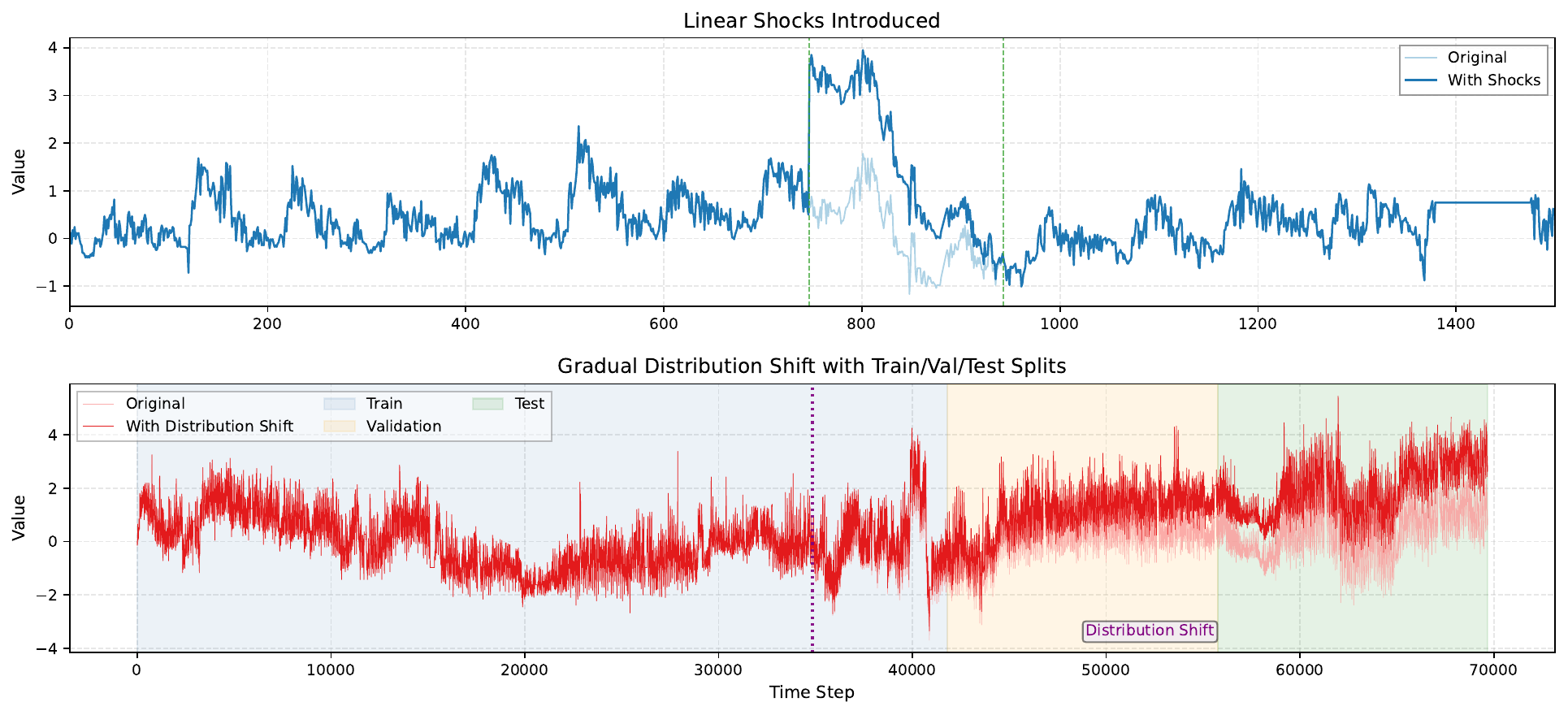}
    \caption{Examples of shock and distribution shift}
    \label{fig:shift}
\end{figure*}
\definecolor{lightblue}{RGB}{235, 245, 255}

\begin{table*}[!h]
\caption{Main results. Lower is better (↓). AVG denotes the average performance across five base models; IMP shows the relative improvement (\%) of +TEFL over Baseline.}
\fontsize{8}{8}\selectfont
\renewcommand{\arraystretch}{1.1}
\setlength{\tabcolsep}{2.2pt}
\centering
\begin{tabular}{@{}cccccccccccccccc@{}}
\toprule[1.5pt]
\multicolumn{2}{c}{Model} &
\multicolumn{2}{c}{SOFTS} &
\multicolumn{2}{c}{iTrans} &
\multicolumn{2}{c}{Dlinear} &
\multicolumn{2}{c}{TimeFilter} &
\multicolumn{2}{c}{Amplifier} &
\multicolumn{2}{c}{\shortstack{AVG  \scriptsize $\downarrow$}} &
\multicolumn{2}{c}{IMP} \\
\cmidrule[0.3pt]{3-16}
\multicolumn{2}{c}{Metric} &
MSE & MAE & MSE & MAE & MSE & MAE & MSE & MAE & MSE & MAE & MSE & MAE & MSE & MAE \\
\bottomrule[1pt]

                              & Baseline & 0.394 & 0.404 & 0.408 & 0.410 & 0.404 & 0.407 & 0.381 & 0.395 & 0.383 & 0.397 & \cellcolor{lightblue}0.394 & \cellcolor{lightblue}0.402 & \cellcolor{lightblue} & \cellcolor{lightblue} \\
\multirow{-2}{*}{ETTm1}       & +TEFL    & \textbf{0.362} & \textbf{0.378} & \textbf{0.380} & \textbf{0.388} & \textbf{0.372} & \textbf{0.384} & \textbf{0.366} & \textbf{0.376} & \textbf{0.366} & \textbf{0.378} & \cellcolor{lightblue}\textbf{0.369} & \cellcolor{lightblue}\textbf{0.381} & \cellcolor{lightblue}6.3\% & \cellcolor{lightblue}5.4\% \\ \midrule[0.3pt]
 
                              & Baseline & 0.303 & 0.330 & 0.289 & 0.334 & 0.346 & 0.396 & 0.273 & 0.323 & 0.279 & 0.325 & \cellcolor{lightblue}0.298 & \cellcolor{lightblue}0.341 & \cellcolor{lightblue} & \cellcolor{lightblue} \\
\multirow{-2}{*}{ETTm2}       & +TEFL    & \textbf{0.271} & \textbf{0.320} & \textbf{0.274} & \textbf{0.323} & \textbf{0.278} & \textbf{0.328} & \textbf{0.265} & \textbf{0.316} & \textbf{0.268} & \textbf{0.317} & \cellcolor{lightblue}\textbf{0.271} & \cellcolor{lightblue}\textbf{0.321} & \cellcolor{lightblue}9.1\% & \cellcolor{lightblue}6.0\% \\\midrule[0.3pt]

                              & Baseline & 0.256 & 0.278 & 0.258 & 0.279 & 0.266 & 0.316 & 0.241 & 0.271 & 0.246 & 0.273 & \cellcolor{lightblue}0.253 & \cellcolor{lightblue}0.283 & \cellcolor{lightblue} & \cellcolor{lightblue} \\
\multirow{-2}{*}{Weather}     & +TEFL    & \textbf{0.241} & \textbf{0.268} & \textbf{0.239} & \textbf{0.266} & \textbf{0.253} & \textbf{0.292} & \textbf{0.227} & \textbf{0.259} & \textbf{0.231} & \textbf{0.264} & \cellcolor{lightblue}\textbf{0.238} & \cellcolor{lightblue}\textbf{0.270} & \cellcolor{lightblue}6.0\% & \cellcolor{lightblue}4.7\% \\\midrule[0.3pt]

                              & Baseline & 0.174 & 0.265 & 0.179 & 0.272 & 0.226 & 0.320 & 0.160 & 0.259 & 0.173 & 0.267 & \cellcolor{lightblue}0.182 & \cellcolor{lightblue}0.277 & \cellcolor{lightblue} & \cellcolor{lightblue} \\
\multirow{-2}{*}{Electricity} & +TEFL    & \textbf{0.171} & \textbf{0.261} & \textbf{0.168} & \textbf{0.258} & \textbf{0.202} & \textbf{0.290} & \textbf{0.160} & \textbf{0.256} & \textbf{0.170} & \textbf{0.252} & \cellcolor{lightblue}\textbf{0.174} & \cellcolor{lightblue}\textbf{0.263} & \cellcolor{lightblue}4.5\% & \cellcolor{lightblue}4.7\% \\\midrule[0.3pt]

                              & Baseline & 0.154 & 0.242 & 0.157 & 0.243 & 0.197 & 0.284 & 0.153 & 0.237 & 0.171 & 0.253 & \cellcolor{lightblue}0.166 & \cellcolor{lightblue}0.252 & \cellcolor{lightblue} & \cellcolor{lightblue} \\
\multirow{-2}{*}{EURUSD}      & +TEFL    & \textbf{0.153} & \textbf{0.239} & \textbf{0.151} & \textbf{0.238} & \textbf{0.192} & \textbf{0.277} & \textbf{0.152} & \textbf{0.237} & \textbf{0.152} & \textbf{0.236} & \cellcolor{lightblue}\textbf{0.161} & \cellcolor{lightblue}\textbf{0.246} & \cellcolor{lightblue}3.4\% & \cellcolor{lightblue}2.3\% \\\midrule[0.3pt]

                              & Baseline & 0.883 & 0.517 & 0.897 & 0.522 & 0.848 & 0.541 & 0.878 & 0.515 & 0.857 & 0.514 & \cellcolor{lightblue}0.872 & \cellcolor{lightblue}0.522 & \cellcolor{lightblue} & \cellcolor{lightblue} \\
\multirow{-2}{*}{AQWan}       & +TEFL    & \textbf{0.825} & \textbf{0.501} & \textbf{0.839} & \textbf{0.503} & \textbf{0.822} & \textbf{0.506} & \textbf{0.831} & \textbf{0.501} & \textbf{0.815} & \textbf{0.491} & \cellcolor{lightblue}\textbf{0.826} & \cellcolor{lightblue}\textbf{0.500} & \cellcolor{lightblue}5.3\% & \cellcolor{lightblue}4.1\% \\\midrule[0.3pt]

                              & Baseline & 0.584 & 0.472 & 0.577 & 0.470 & 0.520 & 0.472 & 0.592 & 0.478 & 0.719 & 0.573 & \cellcolor{lightblue}0.599 & \cellcolor{lightblue}0.493 & \cellcolor{lightblue} & \cellcolor{lightblue} \\
\multirow{-2}{*}{ZafNoo}      & +TEFL    & \textbf{0.551} & \textbf{0.454} & \textbf{0.556} & \textbf{0.457} & \textbf{0.513} & \textbf{0.450} & \textbf{0.564} & \textbf{0.466} & \textbf{0.544} & \textbf{0.448} & \cellcolor{lightblue}\textbf{0.545} & \cellcolor{lightblue}\textbf{0.455} & \cellcolor{lightblue}8.9\% & \cellcolor{lightblue}7.8\% \\\midrule[0.3pt]

                              & Baseline & 0.784 & 0.527 & 0.795 & 0.532 & 0.748 & 0.552 & 0.773 & 0.525 & 0.756 & 0.523 & \cellcolor{lightblue}0.771 & \cellcolor{lightblue}0.532 & \cellcolor{lightblue} & \cellcolor{lightblue} \\
\multirow{-2}{*}{AQshunyi}    & +TEFL    & \textbf{0.721} & \textbf{0.500} & \textbf{0.745} & \textbf{0.512} & \textbf{0.721} & \textbf{0.515} & \textbf{0.732} & \textbf{0.509} & \textbf{0.719} & \textbf{0.499} & \cellcolor{lightblue}\textbf{0.728} & \cellcolor{lightblue}\textbf{0.507} & \cellcolor{lightblue}5.6\% & \cellcolor{lightblue}4.7\% \\\midrule[0.3pt]

                              & Baseline & 0.231 & 0.258 & 0.236 & 0.266 & 0.327 & 0.398 & 0.223 & 0.251 & 0.253 & 0.285 & \cellcolor{lightblue}0.254 & \cellcolor{lightblue}0.292 & \cellcolor{lightblue} & \cellcolor{lightblue} \\
\multirow{-2}{*}{Solar}       & +TEFL    & \textbf{0.205} & \textbf{0.257} & \textbf{0.218} & \textbf{0.246} & \textbf{0.253} & \textbf{0.245} & \textbf{0.221} & \textbf{0.251} & \textbf{0.231} & \textbf{0.262} & \cellcolor{lightblue}\textbf{0.224} & \cellcolor{lightblue}\textbf{0.252} & \cellcolor{lightblue}11.9\% & \cellcolor{lightblue}13.6\% \\\midrule[0.3pt]

                              & Baseline & 0.260 & 0.289 & 0.263 & 0.298 & 0.554 & 0.507 & 0.268 & 0.300 & 0.258 & 0.291 & \cellcolor{lightblue}0.321 & \cellcolor{lightblue}0.337 & \cellcolor{lightblue} & \cellcolor{lightblue} \\
\multirow{-2}{*}{CzeLan}      & +TEFL    & \textbf{0.250} & \textbf{0.278} & \textbf{0.255} & \textbf{0.281} & \textbf{0.305} & \textbf{0.339} & \textbf{0.258} & \textbf{0.288} & \textbf{0.245} & \textbf{0.279} & \cellcolor{lightblue}\textbf{0.263} & \cellcolor{lightblue}\textbf{0.293} & \cellcolor{lightblue}18.1\% & \cellcolor{lightblue}13.1\% \\
\bottomrule[1.5pt]
\end{tabular}
\label{tab:result1}
\end{table*}

\subsection{Set up}

To verify the effectiveness of the proposed TEFL method, we conducted experiments on ten datasets. These include commonly used time series forecasting datasets such as the ETT series, Weather, Electricity, etc. We also collected one financial dataset: the EURUSD dataset, which contains 15-minute interval data on the Euro/USD exchange rate and trading volume for 2022. Detailed dataset descriptions are provided in Appendix \ref{section:dataset}.

Our method is independent of the specific form of the base forecasting model. Therefore, to demonstrate its broad applicability, we selected five deep learning-based time series models: two classical models, DLinear  \cite{zeng2023transformers} and iTransformer \cite{liu2023itransformer}, three recently proposed models, SOFTS \cite{han2024softs}, TimesFilter \cite{hu2025timefilter}, and Amplifier \cite{fei2025amplifier}. All experiments are based on the open-source library TSLib \cite{wang2024deep}.

For the forecasting task, we follow previous work. The input window length is set to 96. We evaluate the models on four different output horizons: 96, 192, 336, and 720 steps. All experiments are repeated three times.
\subsection{Results}
Table \ref{tab:result1} reports the results across all datasets and models. For each dataset, the upper row shows the results of the original training method, and the lower row shows the results of TEFL method. These results represent the average performance across four forecasting horizons. The final four columns of the table present the average MSE, average MAE, and the average percentage improvement for all forecasting models.

The experimental results indicate that, in terms of average MSE and MAE, the TEFL method outperforms the corresponding baseline methods on all ten datasets. The extent of performance improvement varies across datasets. The most significant improvement is observed on the CzeLan dataset, where MSE and MAE are substantially reduced by 18.1\% and 13.1\%, respectively. Notable improvements are also achieved on the Solar dataset (MSE: 11.9\%, MAE: 13.6\%). Stable improvements ranging from 5\% to 9\% are observed on datasets including ETTm2, AQWan, Aqsunyi, ETTm1, and Weather. 

From the model perspective, the TEFL method consistently improves the performance of all five base forecasting models. This demonstrates that the optimization mechanism provided by TEFL offers good compatibility and enhancement for different types of model architectures.

\begin{table}[!h]
\caption{Results for shock data. Lower is better.}
\label{tab:shock}
\fontsize{8}{8}\selectfont  
\renewcommand{\arraystretch}{.8}
\setlength{\tabcolsep}{2.1pt}  
\centering
\begin{tabular}{cccccccc}
\toprule[1.5pt]
\multicolumn{2}{c}{Dataset} & \multicolumn{2}{c}{ETTm1} & \multicolumn{2}{c}{ETTm2} & \multicolumn{2}{c}{Weather} \\
\cmidrule[0.3pt]{3-8}
\multicolumn{2}{c}{Metric}  & MSE & MAE & MSE & MAE & MSE & MAE \\
\midrule[1pt]

                             & Baseline & 0.650 & 0.516 & 0.563 & 0.451 & 0.621 & 0.449 \\
\multirow{-2}{*}{Amplifier}  & +TEFL    & \textbf{0.579} & \textbf{0.474} & \textbf{0.505} & \textbf{0.425} & \textbf{0.539} & \textbf{0.404} \\
\cmidrule[0.3pt]{1-8}

                             & Baseline & 0.704 & 0.539 & 0.511 & 0.435 & 0.549 & 0.411 \\
\multirow{-2}{*}{iTrans}     & +TEFL    & \textbf{0.634} & \textbf{0.498} & \textbf{0.480} & \textbf{0.413} & \textbf{0.522} & \textbf{0.388} \\
\cmidrule[0.3pt]{1-8}

                             & Baseline & 0.711 & 0.560 & 0.733 & 0.598 & 0.619 & 0.527 \\
\multirow{-2}{*}{DLinear}    & +TEFL    & \textbf{0.628} & \textbf{0.500} & \textbf{0.511} & \textbf{0.450} & \textbf{0.549} & \textbf{0.429} \\
\cmidrule[0.3pt]{1-8}

                             & Baseline & 0.676 & 0.526 & 0.507 & 0.429 & 0.544 & 0.409 \\
\multirow{-2}{*}{SOFTS}      & +TEFL    & \textbf{0.618} & \textbf{0.493} & \textbf{0.471} & \textbf{0.402} & \textbf{0.501} & \textbf{0.372} \\
\cmidrule[0.3pt]{1-8}

                             & Baseline & 0.625 & 0.503 & 0.508 & 0.429 & 0.522 & 0.400 \\
\multirow{-2}{*}{TimeFilter} & +TEFL    & \textbf{0.577} & \textbf{0.464} & \textbf{0.477} & \textbf{0.407} & \textbf{0.495} & \textbf{0.371} \\
\cmidrule[0.3pt]{1-8}

                             & Baseline & 0.673 & 0.529 & 0.564 & 0.468 & 0.571 & 0.440 \\
\multirow{-2}{*}{AVG}        & +TEFL    & \textbf{0.607} & \textbf{0.486} & \textbf{0.489} & \textbf{0.419} & \textbf{0.521} & \textbf{0.393} \\
\cmidrule[0.3pt]{1-8}

\multicolumn{2}{c}{IMP (\%)} & 9.8\% & 8.1\% & 13.4\% & 10.5\% & 8.7\% & 10.6\% \\
\bottomrule[1.5pt]
\end{tabular}
\end{table}

\begin{table}[!h]
\caption{Results for data with distribution shift. Lower is better.}
\label{tab:shift}
\fontsize{8}{8}\selectfont
\renewcommand{\arraystretch}{.8}
\setlength{\tabcolsep}{2pt}
\centering
\begin{tabular}{cccccccc}
\toprule[1.5pt]
\multicolumn{2}{c}{Dataset} & \multicolumn{2}{c}{ETTm1} & \multicolumn{2}{c}{ETTm2} & \multicolumn{2}{c}{Weather} \\
\cmidrule[0.3pt]{3-8}
\multicolumn{2}{c}{Metric}  & MSE & MAE & MSE & MAE & MSE & MAE \\
\midrule[1pt]

                             & Baseline & 0.472 & 0.459 & 0.342 & 0.392 & 0.441 & 0.451 \\
\multirow{-2}{*}{Amplifier}  & +TEFL    & \textbf{0.435} & \textbf{0.429} & \textbf{0.283} & \textbf{0.340} & \textbf{0.286} & \textbf{0.333} \\
\cmidrule[0.3pt]{1-8}

                             & Baseline & 0.531 & 0.508 & 0.310 & 0.384 & 0.450 & 0.443 \\
\multirow{-2}{*}{iTrans}     & +TEFL    & \textbf{0.492} & \textbf{0.485} & \textbf{0.273} & \textbf{0.344} & \textbf{0.367} & \textbf{0.395} \\
\cmidrule[0.3pt]{1-8}

                             & Baseline & 0.589 & 0.565 & 0.358 & 0.428 & 0.562 & 0.571 \\
\multirow{-2}{*}{DLinear}    & +TEFL    & \textbf{0.433} & \textbf{0.438} & \textbf{0.277} & \textbf{0.337} & \textbf{0.308} & \textbf{0.351} \\
\cmidrule[0.3pt]{1-8}

                             & Baseline & 0.584 & 0.539 & 0.417 & 0.434 & 0.449 & 0.446 \\
\multirow{-2}{*}{SOFTS}      & +TEFL    & \textbf{0.483} & \textbf{0.473} & \textbf{0.291} & \textbf{0.364} & \textbf{0.408} & \textbf{0.423} \\
\cmidrule[0.3pt]{1-8}

                             & Baseline & 0.662 & 0.603 & 0.373 & 0.412 & 0.389 & 0.427 \\
\multirow{-2}{*}{TimeFilter} & +TEFL    & \textbf{0.534} & \textbf{0.503} & \textbf{0.277} & \textbf{0.344} & \textbf{0.332} & \textbf{0.382} \\
\cmidrule[0.3pt]{1-8}

                             & Baseline & 0.568 & 0.535 & 0.360 & 0.410 & 0.458 & 0.468 \\
\multirow{-2}{*}{AVG}        & +TEFL    & \textbf{0.476} & \textbf{0.466} & \textbf{0.280} & \textbf{0.346} & \textbf{0.340} & \textbf{0.377} \\
\cmidrule[0.3pt]{1-8}

\multicolumn{2}{c}{IMP (\%)} & 16.2\% & 13.0\% & 22.2\% & 15.7\% & 25.7\% & 19.5\% \\
\bottomrule[1.5pt]
\end{tabular}
\end{table}
\subsection{Robust to Shocks}
Shocks refer to sudden peaks in the time series. For example, in a series representing gold prices, an abrupt political crisis may cause a sharp immediate rise, followed by a gradual stabilization or decline as the situation calms \cite{chen1993joint}. Such sudden change could cause a significant increase in the prediction errors of recent time steps. Therefore, models that incorporate past prediction errors can directly use the change as input for forecasting future values, thereby improving prediction accuracy during shock periods.

To verify this, we conducted additional experiments on the ETTm1, ETTm2 and Weather datasets. Specifically, we artificially added 30 shocks into each dataset. Each shock lasts for 192 time steps and decays linearly, as illustrated in the upper subfigure of Figure \ref{fig:shift}. We retrained and tested both the baseline method and the TEFL method on these modified datasets. The results are presented in Table \ref{tab:shock}. 
For every model on these shock datasets, TEFL yields significantly lower MSE and MAE. The "IMP" row at the bottom of Table \ref{tab:shock} highlights an average error reduction of 9.8\% for ETTm1, 13.4\% for ETTm2, and 8.7\% for the Weather dataset. Besides, compared to the results in Table \ref{tab:result1}, our method yields a more pronounced accuracy improvement on the dataset containing shocks, which demonstrates that incorporating past prediction errors into the current input can better handle shocks.

\subsection{Robust to distribution shift}
Typically, past prediction errors can partially reflect distribution shifts. Therefore, our proposed TEFL method, which incorporates previous errors into the input, should be capable of modeling distribution shifts to some extent. To verify the robustness of our method against distribution shifts, we conducted additional experiments on the ETTm1, ETTm2, and Weather datasets.

Specifically, we artificially introduced a distribution shift into these datasets. For the normalized dataset, we applied a linearly increasing distribution shift to the latter half of the data, as shown in the lower part of Figure \ref{fig:shift}. It can be observed that a progressively more pronounced distribution shift exists from the end of the training set through to the end of the test set. We repeated experiments on these datasets. And the results are presented in Table \ref{tab:shift}.

As shown in Table \ref{tab:shift}, our method achieves a substantial improvement over directly training these models. The reduction in MAE reaches 13.0\%, 15.7\%, and 19.5\% on the three datasets, respectively. Notably, this improvement is also larger than that observed in Table \ref{tab:result1}. This result underscores the strong adaptability of TEFL method to distribution shifts.


\section{Conclusion}
This paper presents TEFL, a novel training paradigm for time series forecasting that explicitly incorporates historically observed prediction errors as corrective signals during learning. By simulating the causal structure of rolling forecasting, TEFL bridges the gap between conventional shuffled-batch training and real-world deployment dynamics. Extensive experiments across diverse datasets and backbone models demonstrate that TEFL consistently improves forecasting accuracy, achieving up to 10\% error reduction in several settings. Moreover, TEFL exhibits enhanced robustness to distribution shifts and abrupt changes.

Our framework offers a practical and generalizable approach to adaptive forecasting, aligning offline training with online evaluation. Future directions include extending the error feedback mechanism to online learning, developing more effective training strategies, and applying this paradigm to other sequential prediction tasks such as spatiotemporal modeling.
\section*{Impact Statement}


This paper presents work whose goal is to advance the field of 
Machine Learning. There are many potential societal consequences 
of our work, none which we feel must be specifically highlighted here.


\nocite{langley00}

\bibliography{example_paper}
\bibliographystyle{icml2025}

\newpage
\appendix
\onecolumn
\section{Theoretical analysis}
\label{section:theo}
Our theoretical analysis is grounded in a canonical hidden-state time series model under observational noise. Under that setting, even for an oracle predictor— one that knows the true data-generating mechanism- the resulting prediction error on the observed sequence exhibits temporal correlation across consecutive time steps. This correlation arises from two sources: (i) the intrinsic dependence in the latent dynamics, and (ii) the persistent influence of past observation noise through the filtering history.

This implies that even an optimal predictor leaves structured, predictable residuals in the presence of observation noise. Consequently, a secondary correction mechanism that leverages recent residual history can consistently reduce prediction error, without requiring any modification to the base forecaster. This insight forms the core theoretical justification for our approach: residual-guided refinement is provably beneficial even under oracle conditions, provided the observation model is noisy and the transition dynamic is nonlinear—a ubiquitous scenario in real-world time series.
In the following, we formalize this intuition in a nonlinear-Gaussian setting and establish finite-sample guarantees for residual-based correction.
\begin{proposition}[Lag-1 Autocovariance of Residuals]
Consider the scalar state-space model
\begin{align}
    x_t &= f(x_{t-1}) + \eta_{t-1}, \quad \eta_{t-1} \sim \mathcal{N}(0, \sigma_\eta^2), \\
    y_t &= x_t + \varepsilon_t, \quad \varepsilon_t \sim \mathcal{N}(0, \sigma_\varepsilon^2),
\end{align}
where  $  f \in C^2(\mathbb{R})  $  has bounded first and second derivatives,  $  \{x_t\}  $  is a geometrically ergodic Markov chain with invariant density  $  \pi(x) > 0  $  for all  $  x \in \mathbb{R}  $ , and all noise variables are mutually independent. Define the optimal one-step predictor

$$
g^*(y_{t-1}) := \mathbb{E}[y_t \mid y_{t-1}],
$$

and the prediction residual

$$
r_t := y_t - g^*(y_{t-1}).
$$

Assume that  $  \mu_f' := \mathbb{E}_{x \sim \pi}[f'(x)] \neq 0  $  and  $  \sigma_\varepsilon > 0  $ . Then, as  $  \sigma_\varepsilon \to 0  $ ,

$$
\operatorname{Cov}(r_t, r_{t-1}) = -\mu_f' \sigma_\varepsilon^2 + o(\sigma_\varepsilon^2).
$$

In particular,  $  \operatorname{Cov}(r_t, r_{t-1}) \neq 0  $  for all sufficiently small  $  \sigma_\varepsilon > 0  $ .
\end{proposition}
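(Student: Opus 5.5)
The plan is to first rewrite the covariance in a form where the leading term can be isolated cleanly, and only then do a small-$\sigma_\varepsilon$ expansion. The basic fact is that $r_t$ is orthogonal to every square-integrable function of $y_{t-1}$, since $g^*(y_{t-1})=\mathbb{E}[y_t\mid y_{t-1}]$. Stationarity gives $\mathbb{E}[r_t]=0$, so $\operatorname{Cov}(r_t,r_{t-1})=\mathbb{E}[r_t r_{t-1}]$. Writing $r_{t-1}=y_{t-1}-g^*(y_{t-2})$ and dropping the $y_{t-1}$ part by orthogonality, I expect the exact identity
$$
\operatorname{Cov}(r_t,r_{t-1}) \;=\; -\,\mathbb{E}\big[r_t\, g^*(y_{t-2})\big]
\;=\; -\,\mathbb{E}\big[(m(y_{t-1},y_{t-2})-g^*(y_{t-1}))\,g^*(y_{t-2})\big],
$$
where $m(y_{t-1},y_{t-2}):=\mathbb{E}[f(x_{t-1})\mid y_{t-1},y_{t-2}]$. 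This identity is the reason for the detour: expanding $r_t$ and $r_{t-1}$ naively leaves $O(\sigma_\varepsilon^2)$ remainders multiplied by $O(1)$ terms such as $\eta_{t-2}$. Those products can contribute at order $\sigma_\varepsilon^2$ and would contaminate the constant. The identity removes that problem, because $r_t$ is now paired only with a function of $y_{t-2}$.

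Next I would expand the predictor. Since $x_{t-1}=y_{t-1}-\varepsilon_{t-1}$, Taylor's theorem with bounded $f''$ gives
$$
f(x_{t-1})=f(y_{t-1})-f'(y_{t-1})\varepsilon_{t-1}+O(\varepsilon_{t-1}^2).
$$
By Tweedie's formula, $\mathbb{E}[\varepsilon_{t-1}\mid y_{t-1}]=\sigma_\varepsilon^2\,(\log p_{y})'(y_{t-1})$, which is $O(\sigma_\varepsilon^2)$ in $L^2$ under the stated regularity of $\pi$. Hence $g^*(y)=f(y)+O_{L^2}(\sigma_\varepsilon^2)$.

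Substituting $y_t=f(x_{t-1})+\eta_{t-1}+\varepsilon_t$ then gives the linearization
$$
r_t=\eta_{t-1}+\varepsilon_t-f'(x_{t-1})\varepsilon_{t-1}+R_t,\qquad \|R_t\|_{L^2}=O(\sigma_\varepsilon^2),
$$
with $f'(y_{t-1})$ replaced by $f'(x_{t-1})$ at cost $O(\varepsilon^2)$. Similarly $g^*(y_{t-2})=f(x_{t-2})+f'(x_{t-2})\varepsilon_{t-2}+O_{L^2}(\sigma_\varepsilon^2)$. The sign convention needs care here: as written, $r_{t-1}$ carries $+\varepsilon_{t-1}$, while the predictor $g^*(y_{t-2})$ enters with a minus sign.

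I would then compute $-\mathbb{E}[r_t\,g^*(y_{t-2})]$ term by term.
\begin{itemize}
\item $\eta_{t-1}$ and $\varepsilon_t$ are independent of $y_{t-2}$, so their contributions vanish.
\item The term $-f'(x_{t-1})\varepsilon_{t-1}$ times $f(x_{t-2})$ vanishes because $\varepsilon_{t-1}$ is independent of $(x_{t-1},x_{t-2})$.
\end{itemize}
So the covariance appears to reduce to remainder interactions. In that case the $-\mu_f'\sigma_\varepsilon^2$ term must come from the correlation between $R_t$ and $f(x_{t-2})$, that is, from the $O(\sigma_\varepsilon^2)$ corrections in $g^*(y_{t-1})$ versus $m$. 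Concretely, conditioning additionally on $y_{t-2}$ shifts the posterior mean of $\varepsilon_{t-1}$ by an amount whose $\sigma_\varepsilon^2$ coefficient depends on $y_{t-2}$ only through $f(x_{t-2})$. I would therefore compute $\mathbb{E}[\varepsilon_{t-1}\mid y_{t-1},y_{t-2}]-\mathbb{E}[\varepsilon_{t-1}\mid y_{t-1}]$ to order $\sigma_\varepsilon^2$ by Gaussian conditioning, multiply by $-f'(y_{t-1})$, and pair it with $f(x_{t-2})$. This should yield $-\sigma_\varepsilon^2\,\mathbb{E}[f'(x)]$ after an integration by parts against the stationary density, using Stein's identity for the Gaussian $\eta_{t-2}$.

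As a cross-check I would redo the direct product expansion, where the same constant comes from $\mathbb{E}[-f'(x_{t-1})\varepsilon_{t-1}\cdot\varepsilon_{t-1}]=-\mu_f'\sigma_\varepsilon^2$. In that route I would verify that the remaining $O(\sigma_\varepsilon^2)\times\eta_{t-2}$ cross terms cancel exactly against the Tweedie correction.

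I expect the main obstacle to be controlling the remainders uniformly, to show that everything beyond the displayed term is $o(\sigma_\varepsilon^2)$ and not merely $O(\sigma_\varepsilon^2)$. I would handle this in three parts.
\begin{itemize}
\item Use bounded $f',f''$ to bound the Taylor remainders by $C\varepsilon^2$.
\item Use geometric ergodicity with $\pi>0$ to obtain finite moments and a smooth score $(\log p_y)'$ converging to $(\log\pi)'$ as $\sigma_\varepsilon\to0$.
\item Apply dominated convergence to the $\sigma_\varepsilon^{-2}$-rescaled error terms.
\end{itemize}
If the score of $\pi$ is not integrable enough, I would instead keep the argument at the level of the exact identity and only expand $f$, avoiding Tweedie altogether.
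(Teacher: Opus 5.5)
\textbf{There is a genuine gap, and it sits in the statement itself.} Two things in your plan are correct: the exact identity $\operatorname{Cov}(r_t,r_{t-1})=-\mathbb{E}[r_t\,g^*(y_{t-2})]$, and your warning that the naive expansion leaves $O(\sigma_\varepsilon^2)\times\eta_{t-2}$ products at leading order. The gap is the step you only announce: that the computation ``should yield $-\sigma_\varepsilon^2\mathbb{E}[f'(x)]$'', or equivalently that those cross terms cancel against the Tweedie correction. They do not cancel, and the constant in the statement is wrong.

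Take $f(x)=ax$ with $0<|a|<1$, which satisfies every hypothesis as written. Then $\pi=\mathcal{N}(0,s^2)$ with $s^2=\sigma_\eta^2/(1-a^2)$, and $g^*(y)=by$ with $b=as^2/(s^2+\sigma_\varepsilon^2)$. Using $\gamma_y(0)=s^2+\sigma_\varepsilon^2$ and $\gamma_y(k)=a^k s^2$ for $k\ge1$,
\[
\operatorname{Cov}(r_t,r_{t-1}) = ab\,s^2(b-a) = -\frac{a^3 s^4\sigma_\varepsilon^2}{(s^2+\sigma_\varepsilon^2)^2} = -a^3\sigma_\varepsilon^2+O(\sigma_\varepsilon^4),
\]
not $-a\sigma_\varepsilon^2$. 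In the naive expansion the missing piece is explicit. The residual $r_t$ contains the remainder $(a-b)x_{t-1}\approx (a\sigma_\varepsilon^2/s^2)\,x_{t-1}$. Its product with the $\eta_{t-2}$ inside $r_{t-1}$ contributes $a\sigma_\varepsilon^2\sigma_\eta^2/s^2=a(1-a^2)\sigma_\varepsilon^2$. The paper's proof is exactly your ``cross-check'': it keeps only $\mathbb{E}[-f'(y_{t-1})\varepsilon_{t-1}^2]$ and treats everything else as independent zero-mean products or higher order, so it drops this term too.

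\textbf{What your identity actually gives.} Conditioning on $(x_{t-1},x_{t-2},\varepsilon_{t-2})$ gives exactly $\operatorname{Cov}(r_t,r_{t-1})=\mathbb{E}[\Delta(x_{t-1})\,g^*(y_{t-2})]$. Here $\Delta(z):=\mathbb{E}[g^*(z+\varepsilon_{t-1})]-f(z)=\sigma_\varepsilon^2(\pi f')'(z)/\pi(z)+o(\sigma_\varepsilon^2)$, using Tweedie with the correct sign, $\mathbb{E}[\varepsilon\mid y]=-\sigma_\varepsilon^2(\log p_y)'(y)$. Replacing $g^*(y_{t-2})$ by $f(x_{t-2})$ costs only higher order. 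Conditioning on $x_{t-1}$ and integrating by parts against $\pi$ then yields $-\sigma_\varepsilon^2\,\mathbb{E}_\pi[f'(X)q'(X)]$. The function $q(z):=\mathbb{E}[f(x_{t-2})\mid x_{t-1}=z]=z+\sigma_\eta^2(\log\pi)'(z)$ comes from Tweedie applied to $x_{t-1}=f(x_{t-2})+\eta_{t-2}$. Hence
\[
\operatorname{Cov}(r_t,r_{t-1}) = -\sigma_\varepsilon^2\Big(\mu_f'+\sigma_\eta^2\,\mathbb{E}_\pi\big[f'(X)(\log\pi)''(X)\big]\Big)+o(\sigma_\varepsilon^2),
\]
which reduces to $-a^3\sigma_\varepsilon^2$ in the linear example.

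Your target $-\mu_f'$ would need $q(z)=z$, i.e.\ $\mathbb{E}[\eta_{t-2}\mid x_{t-1}]=0$. That fails because $x_{t-1}$ carries information about $\eta_{t-2}$. So no completion of your plan can reach the stated expansion. What your method can prove is the corrected expansion above. The ``in particular'' clause then requires $\mathbb{E}_\pi[f'(X)(1+\sigma_\eta^2(\log\pi)''(X))]\neq0$ rather than $\mu_f'\neq0$.
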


\begin{proof}
We proceed in three steps: (i) asymptotic expansion of the conditional expectation under small observation noise, (ii) decomposition of the residuals, and (iii) precise covariance calculation.

\medskip

\noindent\textbf{Step 1: Small-noise expansion of  $  \mathbb{E}[f(x) \mid y]  $ .}

Consider the static observation model

$$
x \sim \pi(x), \quad y = x + \varepsilon, \quad \varepsilon \sim \mathcal{N}(0, \sigma_\varepsilon^2),
$$

with  $  \pi \in C^2(\mathbb{R})  $ ,  $  \pi(x) > 0  $ , and  $  f \in C^2(\mathbb{R})  $  having bounded derivatives up to order 2. The posterior density is

$$
p(x \mid y) = \frac{\pi(x) \phi\big((y - x)/\sigma_\varepsilon\big)}{\int_{\mathbb{R}} \pi(u) \phi\big((y - u)/\sigma_\varepsilon\big)\, du},
\quad \text{where } \phi(z) = (2\pi)^{-1/2} e^{-z^2/2}.
$$

Using the change of variable  $  x = y - \sigma_\varepsilon z  $ , we obtain

$$
\mathbb{E}[f(x) \mid y] = 
\frac{\int_{\mathbb{R}} f(y - \sigma_\varepsilon z) \pi(y - \sigma_\varepsilon z) \phi(z)\, dz}
     {\int_{\mathbb{R}} \pi(y - \sigma_\varepsilon z) \phi(z)\, dz}.
$$

By Taylor's theorem with remainder, for some constants  $  C_f, C_\pi > 0  $ ,
\begin{align*}
f(y - \sigma_\varepsilon z) &= f(y) - \sigma_\varepsilon f'(y) z + \tfrac{1}{2} \sigma_\varepsilon^2 f''(y) z^2 + R_f(z), \\
\pi(y - \sigma_\varepsilon z) &= \pi(y) - \sigma_\varepsilon \pi'(y) z + \tfrac{1}{2} \sigma_\varepsilon^2 \pi''(y) z^2 + R_\pi(z),
\end{align*}
where  $  |R_f(z)| \leq C_f \sigma_\varepsilon^3 |z|^3  $  and  $  |R_\pi(z)| \leq C_\pi \sigma_\varepsilon^3 |z|^3  $ .

Since  $  \mathbb{E}_\phi[z] = 0  $ ,  $  \mathbb{E}_\phi[z^2] = 1  $ , and  $  \mathbb{E}_\phi[|z|^3] < \infty  $ , integration against  $  \phi(z)  $  yields
\begin{align*}
\text{Denominator} &= \pi(y) + \tfrac{1}{2} \sigma_\varepsilon^2 \pi''(y) + O(\sigma_\varepsilon^3), \\
\text{Numerator}   &= f(y)\pi(y) + \sigma_\varepsilon^2 \Big( \tfrac{1}{2} f(y)\pi''(y) + \tfrac{1}{2} f''(y)\pi(y) + f'(y)\pi'(y) \Big) + O(\sigma_\varepsilon^3).
\end{align*}

Dividing and using  $  (a + \delta)/(b + \epsilon) = a/b + (\delta b - a \epsilon)/b^2 + O(\delta^2 + \epsilon^2)  $ , we obtain the pointwise expansion
\begin{equation}\label{eq:cond_expansion}
\mathbb{E}[f(x) \mid y] = f(y) + \sigma_\varepsilon^2 \left( \tfrac{1}{2} f''(y) + f'(y) \frac{\pi'(y)}{\pi(y)} \right) + O(\sigma_\varepsilon^3),
\end{equation}
valid for all  $  y \in \mathbb{R}  $ , with the remainder uniform on compact sets. Since  $  \pi  $  is the stationary density of a geometrically ergodic chain and  $  f'  $  is bounded, all expectations below are well-defined.

\medskip

\noindent\textbf{Step 2: Residual decomposition.}

Recall that  $  y_t = f(x_{t-1}) + \eta_{t-1} + \varepsilon_t  $ , so

$$
r_t = y_t - \mathbb{E}[y_t \mid y_{t-1}] = f(x_{t-1}) - \mathbb{E}[f(x_{t-1}) \mid y_{t-1}] + \eta_{t-1} + \varepsilon_t.
$$

Since  $  y_{t-1} = x_{t-1} + \varepsilon_{t-1}  $ , we may write  $  x_{t-1} = y_{t-1} - \varepsilon_{t-1}  $ . Applying Taylor's theorem to  $  f  $  around  $  y_{t-1}  $  gives

$$
f(x_{t-1}) = f(y_{t-1}) - f'(y_{t-1}) \varepsilon_{t-1} + \tfrac{1}{2} f''(y_{t-1}) \varepsilon_{t-1}^2 + O_p(\sigma_\varepsilon^3).
$$

On the other hand, by the small-noise expansion \eqref{eq:cond_expansion},

$$
\mathbb{E}[f(x_{t-1}) \mid y_{t-1}] = f(y_{t-1}) + O(\sigma_\varepsilon^2).
$$

Subtracting these two expressions yields

$$
f(x_{t-1}) - \mathbb{E}[f(x_{t-1}) \mid y_{t-1}] = -f'(y_{t-1}) \varepsilon_{t-1} + O_p(\sigma_\varepsilon^2).
$$

Therefore, the residual admits the asymptotic representation
\begin{equation}\label{eq:rt_exp}
r_t = -f'(y_{t-1}) \varepsilon_{t-1} + \eta_{t-1} + \varepsilon_t + O_p(\sigma_\varepsilon^2).
\end{equation}
Analogously,
\begin{equation}\label{eq:rtm1_exp}
r_{t-1} = -f'(y_{t-2}) \varepsilon_{t-2} + \eta_{t-2} + \varepsilon_{t-1} + O_p(\sigma_\varepsilon^2).
\end{equation}

\medskip

\noindent\textbf{Step 3: Covariance calculation.}

Since  $ \mathbb{E}[r_t] = \mathbb{E}[r_{t-1}] = 0 $  by construction of the optimal predictor, we have

$$
\operatorname{Cov}(r_t, r_{t-1}) = \mathbb{E}[r_t r_{t-1}].
$$

Substituting the expansions \eqref{eq:rt_exp} and \eqref{eq:rtm1_exp}, and using the mutual independence of all noise variables across time (in particular,  $ \varepsilon_s $ ,  $ \eta_s $  are independent of  $ \{y_u : u \leq s-1\} $ ), we observe that all cross terms vanish in expectation except those involving the same noise instance.

The only non-vanishing contribution at order  $ \sigma_\varepsilon^2 $  arises from the product of  $ -f'(y_{t-1}) \varepsilon_{t-1} $  (from  $ r_t $ ) and  $ \varepsilon_{t-1} $  (from  $ r_{t-1} $ ). All other products involve independent zero-mean random variables or higher-order remainders. Hence,

$$
\mathbb{E}[r_t r_{t-1}] = \mathbb{E}\big[ -f'(y_{t-1}) \varepsilon_{t-1} \cdot \varepsilon_{t-1} \big] + o(\sigma_\varepsilon^2)
= -\mathbb{E}\big[ f'(y_{t-1}) \varepsilon_{t-1}^2 \big] + o(\sigma_\varepsilon^2).
$$




Since  $ y_{t-1} = x_{t-1} + \varepsilon_{t-1} $, expanding  $ f' $  around  $ x_{t-1} $ ,

$$
f'(x_{t-1} + \varepsilon_{t-1}) = f'(x_{t-1}) + f''(x_{t-1}) \varepsilon_{t-1} + O_p(\sigma_\varepsilon^2),
$$

and using  $ \mathbb{E}[\varepsilon_{t-1}] = 0 $ ,  $ \mathbb{E}[\varepsilon_{t-1}^2] = \sigma_\varepsilon^2 $ ,  $ \mathbb{E}[\varepsilon_{t-1}^3] = 0 $ , we obtain

$$
\mathbb{E}[f'(x_{t-1} + \varepsilon_{t-1}) \varepsilon_{t-1}^2]
= f'(x_{t-1}) \sigma_\varepsilon^2 + O(\sigma_\varepsilon^4).
$$

Taking expectation over the stationary distribution  $ \pi $  of  $ x_{t-1} $  yields

$$
\mathbb{E}[f'(y_{t-1}) \varepsilon_{t-1}^2] = \mathbb{E}_{x \sim \pi}[f'(x)] \, \sigma_\varepsilon^2 + o(\sigma_\varepsilon^2).
$$

Combining the above, we conclude

$$
\operatorname{Cov}(r_t, r_{t-1}) = -\mu_f' \, \sigma_\varepsilon^2 + o(\sigma_\varepsilon^2),
\quad \text{where } \mu_f' := \mathbb{E}_{x \sim \pi}[f'(x)].
$$

In particular, if  $ f $  is nonlinear and the stationary measure  $ \pi $  is non-degenerate, then  $ \mu_f' \neq 0 $  in general, implying  $ \operatorname{Cov}(r_t, r_{t-1}) \neq 0 $  for all sufficiently small  $ \sigma_\varepsilon > 0 $ .
\end{proof}









\subsection{Finite-Sample Guarantee for Linear TEFL}

\begin{theorem}[Finite-Sample MSE Reduction by TEFL]
\label{thm:tefl_finite_sample}
Consider the nonlinear Gaussian state-space model:

$$
x_t = f(x_{t-1}) + \eta_{t-1}, \quad y_t = x_t + \varepsilon_t,
$$

where  $  f \in C^2(\mathbb{R})  $ ,  $  \eta_t \sim \mathcal{N}(0,\sigma_\eta^2)  $ , and  $  \varepsilon_t \sim \mathcal{N}(0,\sigma_\varepsilon^2)  $  are mutually independent.  
Assume $  |f'(x)| \leq L < 1  $  for all  $  x \in \mathbb{R}  $, then, the joint process  $  \{(x_t, y_t)\}_{t \geq 0}  $  is stationary and geometrically ergodic. Additionally, we assume $r_t$ is subguassion and the stationaty distribution of $x_t$, denoted as $\pi(x)$ is also subguassion.
Define the base predictor  $  \hat{y}_t^{\mathrm{base}} = g^*(y_{t-1})  $  and its residual  $  r_t = y_t - g^*(y_{t-1})  $ .  
The TEFL predictor employs a linear adapter:

$$
\hat{y}_t^{\mathrm{TEFL}} = g^*(y_{t-1}) + \beta r_{t-1},
$$

with the empirical coefficient computed from  $  T  $  observations as

$$
\hat{\beta}_T = \frac{\sum_{t=2}^{T} r_t r_{t-1}}{\sum_{t=2}^{T} r_{t-1}^2}.
$$

Let the population quantities be

$$
\gamma := Cov(r_t, r_{t-1}), V := Var(r_t),  \rho_1 := \frac{\gamma}{V}.
$$

Assume  $  V > 0  $  and  $  |\rho_1| > 0  $ —conditions satisfied whenever  $  f  $  is nonlinear and  $  \sigma_\varepsilon > 0  $ .

Then there exist constants  $  c> 0  $ , depending only on the geometric ergodicity rate of  $  \{r_t\}  $  and the noise variances, such that for any  $  \delta \in (0,1)  $ , with probability at least  $  1 - \delta  $ ,
\begin{equation}
\label{eq:finite_sample_bound}
\frac{1}{T-1} \sum_{t=2}^{T} \bigl( y_t - g^*(y_{t-1}) - \hat{\beta}_T r_{t-1} \bigr)^2
\leq
\underbrace{\frac{1}{T-1} \sum_{t=2}^{T} r_t^2}_{\text{Base MSE}}
-
V \rho_1^2
+
c\left( \sqrt{\frac{\log(1/\delta)}{T}}
+
\frac{(\log(1/\delta))^2}{T}\right)
\end{equation}
In particular, for sufficiently large  $  T  $ , the right-hand side is strictly smaller than the base MSE with high probability.
\end{theorem}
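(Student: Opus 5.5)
The first step is an exact algebraic identity for the in-sample OLS fit, after which only concentration of two empirical moments remains. Write $S_{01} := \frac{1}{T-1}\sum_{t=2}^T r_t r_{t-1}$, $S_{11} := \frac{1}{T-1}\sum_{t=2}^T r_{t-1}^2$ and $S_{00} := \frac{1}{T-1}\sum_{t=2}^T r_t^2$. Since $\hat\beta_T = S_{01}/S_{11}$ is the least-squares coefficient, expanding the square gives
\begin{equation*}
\frac{1}{T-1}\sum_{t=2}^{T}\bigl(r_t - \hat\beta_T r_{t-1}\bigr)^2 = S_{00} - 2\hat\beta_T S_{01} + \hat\beta_T^2 S_{11} = S_{00} - \frac{S_{01}^2}{S_{11}}.
\end{equation*}
The left-hand side of \eqref{eq:finite_sample_bound} is therefore the base MSE minus $S_{01}^2/S_{11}$. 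It suffices to show that, with probability at least $1-\delta$,
\begin{equation*}
\frac{S_{01}^2}{S_{11}} \ge \frac{\gamma^2}{V} - c\Bigl(\sqrt{\log(1/\delta)/T} + (\log(1/\delta))^2/T\Bigr),
\end{equation*}
noting that $V\rho_1^2 = \gamma^2/V$. Note also $\mathbb{E}[r_t]=0$ and stationarity, so $\mathbb{E}[r_t r_{t-1}]=\gamma$ and $\mathbb{E}[r_{t-1}^2]=V$.

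The second step is concentration of $S_{01}$ around $\gamma$ and of $S_{11}$ around $V$. The residual $r_t$ is a measurable function of $(x_{t-2},x_{t-1},\eta_{t-1},\varepsilon_{t-1},\varepsilon_t)$, so $\{(r_t,r_{t-1})\}$ is a function of a geometrically ergodic Markov chain, hence geometrically $\beta$-mixing. This uses the contraction $|f'|\le L<1$ together with Gaussian innovations, which I take as given from the hypotheses. The summands $r_t r_{t-1}$ and $r_{t-1}^2$ are products of sub-Gaussian variables and are therefore sub-exponential. I would apply a Bernstein inequality for geometrically $\beta$-mixing sub-exponential sequences, e.g.\ Merlevède–Peligrad–Rio, or blocking with Berbee coupling followed by the i.i.d.\ Bernstein bound. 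This yields, with probability at least $1-\delta/2$ each,
\begin{equation*}
|S_{01}-\gamma|,\ |S_{11}-V| \le c_1\Bigl(\sqrt{\log(1/\delta)/T} + (\log(1/\delta))^2/T\Bigr).
\end{equation*}
The squared logarithm is the expected price of mixing combined with sub-exponential tails, since the block length is about $\log T$ or $\log(1/\delta)$. This step is the main obstacle: one must verify the mixing rate for the residual process and track constants through the blocking argument. I would cite the mixing Bernstein inequality rather than re-derive it.

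The third step is the perturbation argument for the ratio $h(a,b)=a^2/b$ near $(\gamma,V)$. Call $\Delta$ the deviation bound from the second step. When $\Delta \le V/2$, we have $S_{11}\ge V/2$, and a first-order bound gives
\begin{equation*}
\Bigl|\frac{S_{01}^2}{S_{11}} - \frac{\gamma^2}{V}\Bigr| \le \frac{2|\gamma|+\Delta}{V/2}\,\Delta + \frac{2\gamma^2}{V^2}\,\Delta.
\end{equation*}
This is $O(\Delta)$, with constants depending on $\gamma$, $V$ and the noise variances. When $\Delta > V/2$, the claimed bound holds trivially once $c$ is enlarged. In that regime $\sqrt{\log(1/\delta)/T}+(\log(1/\delta))^2/T$ is bounded below by a constant, so the right-hand side exceeds $S_{00}$ and the left-hand side is at most $S_{00}$ by the identity in the first step. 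A union bound over the two events completes the proof.

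For the final claim, note that $\gamma^2/V>0$ by the proposition. Hence for $T$ large enough relative to $\log(1/\delta)$, the error term is below $\gamma^2/V$, and the TEFL MSE is strictly smaller than the base MSE.
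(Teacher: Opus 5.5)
Your proof is correct and shares the paper's skeleton: geometric $\beta$-mixing of the residual process, the Merlev\`ede--Peligrad--Rio Bernstein inequality for sub-exponential summands, and a perturbation bound for a ratio kept away from zero. Your decomposition, however, is genuinely different.

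The paper expands the TEFL MSE around the population coefficient $\beta^*=\gamma/V$ into three pieces: a population term (A) with mean $V(1-\rho_1^2)$, an estimation term $(\hat\beta_T-\beta^*)^2$(B), and a cross term in (C). It must then control (A), (C) and $S_{00}=\frac{1}{T-1}\sum_{t=2}^T r_t^2$ separately. Its concentration event is built from $U_t,V_t,W_t$ only, so it does not literally cover (A) or $S_{00}$. The paper then re-substitutes $V=S_{00}-\xi_T$ and absorbs a $\Delta_T^2$ term ``for sufficiently large $T$'', restricting throughout to $\Delta_T<V/2$.

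Your exact in-sample identity $S_{00}-S_{01}^2/S_{11}$ makes the base MSE cancel exactly, so only $S_{01}$ and $S_{11}$ need to concentrate. Your deterministic handling of the regime $\Delta>V/2$ gives a constant $c$ valid for every $T$. The identity also shows that the in-sample TEFL MSE never exceeds the base MSE, so the substance of the theorem is the size $\gamma^2/V$ of the gain.

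What the paper's route buys is the split into population risk plus estimation error $(\hat\beta_T-\beta^*)^2$. That split is the natural starting point for the out-of-sample statement made after the proof. Your identity relies on $\hat\beta_T$ being the in-sample least-squares minimizer, so it does not transfer to test error.

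One caveat applies equally to you and to the paper. The Merlev\`ede--Peligrad--Rio bound has a prefactor $n$ in its first term, so inverting it gives $(\log(T/\delta))^2/T$ rather than $(\log(1/\delta))^2/T$. This is consistent with your own remark that blocks have length of order $\log T$. It is harmless for $\delta\le 1/e$, since $(\log T)^2/T\le (16/e^2)/\sqrt{T}\le (16/e^2)\sqrt{\log(1/\delta)/T}$. The same restriction lets you replace the $\log(2/\delta)$ from your union bound by a multiple of $\log(1/\delta)$. The stated form should therefore be read with $\delta$ bounded away from $1$.
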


\begin{proof}
\textbf{Step 1: Error decomposition.}  

Let  $  \beta^* = \gamma / V  $ . A standard algebraic identity yields

$$
\frac{1}{T-1} \sum_{t=2}^T (r_t - \hat{\beta}_T r_{t-1})^2
= \text{(A)} + (\hat{\beta}_T - \beta^*)^2 \text{(B)} + 2(\hat{\beta}_T - \beta^*) \text{(C)},
$$

where

$$
\text{(A)} = \frac{1}{T-1} \sum_{t=2}^T (r_t - \beta^* r_{t-1})^2, \quad
\text{(B)} = \frac{1}{T-1} \sum_{t=2}^T r_{t-1}^2, \quad
\text{(C)} = \frac{1}{T-1} \sum_{t=2}^T (r_t - \beta^* r_{t-1}) r_{t-1}.
$$

By construction of  $  \beta^*  $ , we have  $  \mathbb{E}[\text{(C)}] = 0  $  and  $  \mathbb{E}[\text{(A)}] = V - \gamma^2/V = V(1 - \rho_1^2)  $ .

\medskip

\textbf{Step 2: Mixing.}  

We define $X_t=(x_t,\epsilon_t)$, then $X_t$ is stationary and geometrically ergodic, as a result, $(X_t,X_{t-1})$ is also stationary and geometrically ergodic. According to Theorem 3.1 in \citet{2005Basic}, $(X_t,X_{t-1})$ is $\beta$ -mixing. $r_t$ is a measurable function of $(X_t,X_{t-1})$, as a result, is also $\beta$ -mixing.

Besides, the residual process  $r_t= y_t - g^*(y_{t-1}) =x_t+\epsilon_t-g^*(y_{t-1})$ is subguassion. Therefore:

$$
U_t = r_t r_{t-1} - \gamma, \quad
V_t = r_{t-1}^2 - V, \quad
W_t = (r_t - \beta^* r_{t-1}) r_{t-1}
$$

are stationary, exponentially  $ \beta $ -mixing, and sub-exponential.

\medskip

\textbf{Step 3: Concentration inequalities.}  

By Theorem 1 of \citet{merlevede2011bernstein}, for any  $  \delta \in (0,1)  $ , there exists a constant  $  c > 0  $ , depending only on the mixing rate and sub-exponential norms, such that with probability at least  $  1 - \delta  $ ,

$$
\max\left\{
\left| \frac{1}{T-1} \sum_{t=2}^T U_t \right|,
\left| \frac{1}{T-1} \sum_{t=2}^T V_t \right|,
\left| \frac{1}{T-1} \sum_{t=2}^T W_t \right|
\right\}
\leq
c \left( \sqrt{\frac{\log(1/\delta)}{T}} + \frac{(\log(1/\delta))^2}{T} \right).
$$

Denote this upper bound by

$$
\Delta_T := c \left( \sqrt{\frac{\log(1/\delta)}{T}} + \frac{(\log(1/\delta))^2}{T} \right).
$$

On this event, we have

$$
\left| \frac{1}{T-1} \sum_{t=2}^T r_t r_{t-1} - \gamma \right| \leq \Delta_T,
\qquad
\left| \frac{1}{T-1} \sum_{t=2}^T r_{t-1}^2 - V \right| \leq \Delta_T.
$$

Since  $  V > 0  $ , for  $  T  $  large enough such that  $  \Delta_T < V/2  $ , it follows that

$$
|\hat{\beta}_T - \beta^*|
= \left| \frac{\frac{1}{T-1} \sum r_t r_{t-1}}{\frac{1}{T-1} \sum r_{t-1}^2} - \frac{\gamma}{V} \right|
\leq \frac{V \Delta_T + |\gamma| \Delta_T}{V(V - \Delta_T)}
\leq \frac{2(V + |\gamma|)}{V^2} \Delta_T
=: C_0 \Delta_T.
$$

Moreover,

$$
|\text{(A)} - \mathbb{E}[\text{(A)}]| \leq \Delta_T, \quad
|\text{(B)} - V| \leq \Delta_T, \quad
|\text{(C)}| \leq \Delta_T.
$$

\medskip

\textbf{Step 4: Final bound.}  

Substituting into the error decomposition gives, with probability at least  $  1 - \delta  $ ,
\begin{align*}
\frac{1}{T-1} \sum_{t=2}^T (r_t - \hat{\beta}_T r_{t-1})^2
&\leq \mathbb{E}[\text{(A)}] + \Delta_T + (C_0 \Delta_T)^2 (V + \Delta_T) + 2 C_0 \Delta_T \cdot \Delta_T \\
&\leq V(1 - \rho_1^2) + \Delta_T + C_1 \Delta_T^2,
\end{align*}

Note that

$$
\frac{1}{T-1} \sum_{t=2}^{T} r_t^2 = V + \xi_T,
$$

with   $  |\xi_T| \leq \Delta_T  $   on the same high-probability event.

Then,

$$
\frac{1}{T-1} \sum_{t=2}^T (r_t - \hat{\beta}_T r_{t-1})^2 \leq \frac{1}{T-1} \sum_{t=2}^{T} r_t^2 - V\rho_1^2 + \Delta_T + C_1 \Delta_T^2
$$

The quadratic term $\Delta_T^2$ is dominated by $\Delta_T$ for all sufficiently large $T$, and can thus be absorbed into the leading-order error term. Hence, there exists an constant $c$ such that the finite-sample excess risk bound takes the simplified form shown in \eqref{eq:finite_sample_bound}.

\end{proof}

\medskip

This result shows that TEFL achieves a non-vanishing MSE reduction of  $ \gamma^2 / V = \rho_1^2 V $ in expectation, and this gain dominates the  $ \mathcal{O}(1/\sqrt{T}) $  estimation error with high probability for sufficiently large  $ T $ . Moreover, the magnitude of this in-sample improvement closely matches that of the generalization error  $ \mathbb{E}[(y_{T+1} - \hat{y}_{T+1}^{\mathrm{TEFL}})^2] $ , differing only by higher-order terms that vanish as  $ T \to \infty $ .

\subsection{Theorem 1 in \citet{merlevede2011bernstein}}

We first recall the truncation function used in the statement of the theorem.

For any  $ M > 0 $ , define the truncation function  $ \phi_M : \mathbb{R} \to \mathbb{R} $  by

$$
\phi_M(x) = (x \wedge M) \vee (-M).
$$

Let  $ (X_j)_{j \in \mathbb{Z}} $  be a sequence of centered real-valued random variables.
Define the quantity
\begin{equation}
\label{eq:def_V}
V := \sup_{M > 0} \sup_{i > 0} \left( Var(\phi_M(X_i)) + 2 \sum_{j > i} \bigl| Cov(\phi_M(X_i), \phi_M(X_j)) \bigr| \right).
\end{equation}

Assume the following conditions hold:

\begin{enumerate}
    \item There exist constants  $ a > 0 $ ,  $ c > 0 $ , and  $ \gamma_1 > 0 $  such that the  $\tau$ -mixing coefficients satisfy

$$
    \tau(x) \leq a \exp(-c x^{\gamma_1}) \quad \text{for all } x \geq 1,
$$

    \item There exist constants  $ b > 0 $  and  $ \gamma_2 \in (0, \infty] $  such that

$$
    \sup_{k > 0} \mathbb{P}(|X_k| > t) \leq \exp\bigl(1 - (t/b)^{\gamma_2}\bigr) \quad \text{for all } t > 0.
$$

    \item Define  $ \gamma > 0 $  by

$$
    \frac{1}{\gamma} = \frac{1}{\gamma_1} + \frac{1}{\gamma_2},
$$

    and $ \gamma < 1 $ .
\end{enumerate}

Then we have the following Bernstein-type inequality.

\begin{theorem}[Theorem 1 in \citet{merlevede2011bernstein}]
\label{thm:bernstein_merlevede}
Under above assumptions, the constant  $ V $  defined in \eqref{eq:def_V} is finite. Moreover, for any  $ n \geq 4 $ , there exist positive constants  $ C_1, C_2, C_3, C_4 $ , depending only on  $ a, b, c, \gamma_1, \gamma_2 $ , such that for all  $ x > 0 $ ,

$$
\mathbb{P}\left( \max_{1 \leq j \leq n} |S_j| \geq x \right)
\leq
n \exp\!\left( -\frac{x^\gamma}{C_1} \right)
+
\exp\!\left( -\frac{x^2}{C_2 (1 + n V)} \right)
+
\exp\!\left( -\frac{x^2}{C_3 n} \exp\!\left( \frac{x^{\gamma(1 - \gamma)}}{C_4 (\log x)^\gamma} \right) \right),
$$

where  $ S_j = X_1 + X_2 + \cdots + X_j $ .
\end{theorem}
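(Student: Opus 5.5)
Since this is the cited Bernstein-type inequality of \citet{merlevede2011bernstein}, I would reconstruct its proof in three stages: truncation, coupling, and a Bernstein estimate for nearly independent blocks. Fix $x>0$ and a truncation level $M = M(x)$, chosen of order $x^{1-\gamma}$ up to logarithmic factors. Decompose $X_i = \phi_M(X_i) - \mathbb{E}\phi_M(X_i) + R_i$, where $R_i$ collects the part of $X_i$ above level $M$ together with its recentering. By condition 2 and a union bound over $i\le n$, the event $\{\max_i |X_i| > M\}$ has probability at most $n\exp(1-(M/b)^{\gamma_2})$. After optimizing $M$ against the mixing exponent $\gamma_1$ through the relation $1/\gamma = 1/\gamma_1+1/\gamma_2$, this should produce the first term $n\exp(-x^\gamma/C_1)$. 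The recentering means $\mathbb{E}\phi_M(X_i)$ are of size $\exp(-cM^{\gamma_2})$ and are absorbed into the same term.

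For the bounded part $Y_i := \phi_M(X_i)-\mathbb{E}\phi_M(X_i)$, I would bound the Laplace transform $\mathbb{E}\exp(t\sum_{i\le n} Y_i)$ for $t$ up to roughly $M^{-1}$ times a power of $\log$. The plan is a recursive Cantor-type decomposition of $\{1,\dots,n\}$. At each level one removes small gaps between blocks, and then uses the $\tau$-mixing coupling (Dedecker--Prieur) to replace each block by a copy independent of the past. The coupling error is controlled in $L^1$ by $\tau(\text{gap})\le a\exp(-c\,\text{gap}^{\gamma_1})$, and because the $Y_i$ are bounded by $2M$ this transfers to the exponential moment. The removed gaps contribute only a small fraction of the sum and are handled by the same argument at the next level. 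Within the independent blocks, the classical Bernstein inequality gives $\log\mathbb{E} e^{tS}\lesssim t^2 nV/(1-tM\cdot\text{blocksize})$. Here the variance proxy is the quantity $V$ in \eqref{eq:def_V}, which is finite by summability of the truncated covariances. That summability follows from covariance inequalities for $\tau$-mixing, since $|\mathrm{Cov}(\phi_M(X_i),\phi_M(X_j))|\lesssim \tau(j-i)$ up to moment factors, and the series converges by stretched-exponential decay. A Chernoff bound with the optimal $t$ then yields the Gaussian term $\exp(-x^2/(C_2(1+nV)))$. The regime where $t$ must be taken large relative to the block length produces the third, moderate-deviation term.

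To pass from $S_n$ to $\max_{j\le n}|S_j|$, I would either run the block construction uniformly in the endpoint, or apply an Ottaviani/L\'evy-type maximal inequality to the independent-block approximation and control the coupling residuals by a union bound.

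I expect the main obstacle to be the Laplace-transform bound for the bounded part with the sharp variance factor $nV$, rather than a crude $n M^2$. The difficulty is that the Cantor decomposition must keep the total coupling error and the gap contributions below $\exp(-x^{\gamma(1-\gamma)}/(C_4(\log x)^\gamma))$ while only losing constant factors in the variance. My first attempt would take block lengths growing geometrically across the recursion levels, with gap sizes about $(\log n)^{1/\gamma_1}$. I would then verify by induction on the level that the log-Laplace transform stays within a factor $(1+o(1))$ of $t^2 nV/2$ for $t\le c\,M^{-1}(\log n)^{-1/\gamma_1}$.
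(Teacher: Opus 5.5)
The paper gives no proof of this statement; it imports Theorem~1 of Merlev\`ede--Peligrad--Rio, so the benchmark is their argument. Your architecture (truncation, $\tau$-coupling on a Cantor-type set, Bernstein for the decoupled blocks, Chernoff) is the right family. However, the quantitative core of your plan is false when $\gamma_1<1$, a case the theorem must cover: for bounded variables $\gamma_2=\infty$ and $\gamma=\gamma_1<1$.

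You aim for $\log\mathbb{E}\exp(t\sum_{i\le n}Y_i)\le(1+o(1))t^2nV/2$ for all $t\le cM^{-1}(\log n)^{-1/\gamma_1}$. Take a stationary renewal-type chain that occasionally enters an excursion, with $X_i=M(1-p)$ during excursions and $X_i=-Mp$ otherwise ($p$ the stationary probability of being in an excursion), and excursion lengths with $\mathbb{P}(L>\ell)=e^{-c\ell^{\gamma_1}}$. Its coefficients satisfy $\tau(k)\le a\,e^{-c'k^{\gamma_1}}$. Yet with probability of order $e^{-cn^{\gamma_1}}$ (up to polynomial factors) a single excursion covers $[1,n]$, giving $S_n\ge nM/2$. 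Hence $\log\mathbb{E}e^{tS_n}\gtrsim tnM/2-cn^{\gamma_1}$, which is far larger than $Ct^2nV=O(t^2n)$ whenever $n^{\gamma_1-1}\ll t\ll 1$ ($M$ fixed), in particular at $t=cM^{-1}(\log n)^{-1/\gamma_1}$. The same holds for the sum over any subset of $[1,n]$ of size at least $n/2$, Cantor-type or not.

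Your coupling step runs into exactly this obstruction. Converting the $L^1$ coupling error into exponential moments costs a factor $e^{tqM}$ for a block of length $q$. So at recursion level $j$ (blocks of length about $n2^{-j}$) the gap must satisfy $c\,g_j^{\gamma_1}\gtrsim tn2^{-j}M$, not $g_j\approx(\log n)^{1/\gamma_1}$. Keeping the removed mass $\sum_j 2^j g_j$ below $n/2$ then forces $t\lesssim n^{\gamma_1-1}/M$, so the induction you propose cannot close.

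What is missing is a second layer. The coupling argument only controls the Laplace transform of a block of length $A$ for $t\lesssim A^{\gamma_1-1}/M$, so the block length has to be tied to $x$ (so that such $t$ still gives $tx\gtrsim x^\gamma$). The roughly $n/A$ blocks must then be recombined. A single Chernoff bound on the whole sum, as in your plan, cannot produce exponents free of $n$.

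Your explanation of the third term (``$t$ large relative to the block length'') also does not match its shape. That term depends on $n$ only through $x^2/n$ and is negligible unless $\log n\gtrsim x^{\gamma(1-\gamma)}/(\log x)^\gamma$. This points to a remainder from recombining very many $x$-dependent blocks rather than to a choice of $t$.

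Smaller points:
\begin{itemize}
\item The first term needs $(M/b)^{\gamma_2}\gtrsim x^\gamma$, i.e.\ $M\asymp x^{\gamma/\gamma_2}=x^{1-\gamma/\gamma_1}$. This is also the level at which a run of length $x/M$ costs $e^{-c(x/M)^{\gamma_1}}=e^{-cx^\gamma}$. It agrees with your $x^{1-\gamma}$ only when $\gamma_1=1$. Otherwise your choice either breaks the truncation term ($\gamma_1>1$) or weakens the run bound for the truncated part to $e^{-cx^{\gamma\gamma_1}}$ ($\gamma_1<1$).
\item Finiteness of $V$ requires covariance bounds uniform in the truncation level, which $|\mathrm{Cov}(\phi_M(X_i),\phi_M(X_j))|\le M\tau(j-i)$ does not provide. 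You need a covariance inequality that uses condition~2, e.g.\ via the quantile function of $|X_i|$.
\item The Ottaviani step is not automatic: independence holds only for the coupled blocks, while the partial sums $S_j$ cut through blocks and gaps.
\end{itemize}
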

\section{Distinction from Online Adaptation and Temporal Generalization Approaches}

While TEFL enhances forecasting through error feedback, it differs fundamentally from methods that adapt models during deployment or explicitly model temporal distribution shifts. We briefly contrast our approach with two related paradigms.

First, \textit{online adaptation} techniques update model parameters at inference time using newly observed data. A common design principle is to restrict updates to a small subset of parameters to maintain stability. For example, some works adjust only ensemble weights \cite{wen2023onenet}, adapter layers \cite{guo2024online,lee2025lightweight,huang2025online}, or student network components \cite{lau2025fast}. Full-model fine-tuning is generally avoided due to the risk of catastrophic forgetting or gradient instability when learning from single or few samples.

To mitigate noisy gradient estimates, several strategies have been proposed. Experience replay \cite{lau2025fast} incorporates historical observations into the loss computation, thereby reducing variance. Other approaches, such as FSNet \cite{pham2023learning}, maintain dual pathways: one for rapid adaptation using recent data, and another for slow, stable updates based on accumulated experience. Notably, certain methods bypass backpropagation entirely; ELF \cite{lee2025lightweight}, for instance, formulates the adapter as a linear regressor whose optimal solution can be computed analytically from a small batch of recent points.

Second, \textit{temporal domain generalization (TDG)} seeks to improve robustness under continuously evolving data distributions by modeling the dynamics of domain change. Representative approaches include: Gradient Interpolation (GI) \cite{nasery2021training}, which enforces temporal smoothness in the loss landscape; LSSAE \cite{qin2022generalizing}, which separates time-varying and invariant latent factors; and DRAIN \cite{bai2022temporal}, which uses a recurrent network to generate future model parameters. More recent work leverages dynamical systems theory—e.g., Koodos \cite{cai2024continuous} and FreKoo \cite{yu2025learning} employ the Koopman operator to embed nonlinear temporal evolution into a linear predictive space, while STAD \cite{schirmer2024temporal} treats classifier weights as drifting states governed by a state-space process.

In summary, both online adaptation and TDG require either parameter updates during inference or explicit modeling of how the data distribution evolves over time. In contrast, TEFL operates entirely offline: it treats past prediction residuals—naturally available in rolling evaluation—as auxiliary inputs to a fixed model. No parameter modification, experience buffer, or temporal dynamics model is needed, making TEFL lightweight, universally compatible, and free of test-time computational overhead.
\section{Datasets}
\label{section:dataset}
The datasets used in the experiments are as follows.

The ETTm1 and ETTm2 \cite{liu2023itransformer} datasets contain the temperature and electric load data recorded every 15 minutes from transformers in two regions of China, covering the period from 2016 to 2018.

The Weather \cite{liu2023itransformer} dataset records 21 different meteorological indicators every 10 minutes across Germany for the year 2020. Key indicators include air temperature and visibility.

The Electricity dataset \cite{liu2023itransformer} contains the hourly electricity consumption records (in kW·h) of 321 clients. It is sourced from the UCL Machine Learning Repository and covers the period from 2012 to 2014.

The AQShunyi and AQWan datasets \cite{zhang2017cautionary, qiu2024tfb} record the hourly air quality in Beijing from 2013 to 2017, including 11 factors such as PM2.5 and PM10.

The ZafNoo and CzeLan datasets \cite{poyatos2020global} record plant transpiration data every 30 minutes.

The \href{https://www.kaggle.com/datasets/godfrey03/eurusd-15-minute-ohlc-forex-data} {EURUSD} dataset records 6 indicators for the Euro/USD exchange rate, including the opening price, closing price, and trading volume, at 15-minute intervals from 2023 to 2024.

\section{Experiment details}

We provide the pseudo code for the training process of our method in Algorithm \ref{alg:tefl}.

\begin{algorithm}[t]
\caption{Training Procedure of TEFL}
\label{alg:tefl}
\begin{algorithmic}[1]
\REQUIRE Time series  $ \mathcal{D} = \{y_1, y_2, \dots, y_T\} $ , horizon  $ H $ , warm-up epochs  $ E_w $ , total epochs  $ E $ , regularization weight  $ \alpha $ 
\ENSURE Trained base model  $ f $  and error module  $ g $ 

\vspace{0.5em}
\textbf{// Phase 1: Warm-up base model}
\FOR{epoch = 1 \textbf{to}  $ E_w $ }
    \FOR{each batch of samples  $ \{(X_i, Y_i)\}_{i=1}^B $  from  $ \mathcal{D} $ }
        \STATE //  $ X_i = [y_{t-H}, \dots, y_{t-1}] $ ,  $ Y_i = [y_t, \dots, y_{t+H-1}] $ 
        \STATE  $ \hat{Y}_i \gets f(X_i) $ 
        \STATE  $ \epsilon_i \gets Y_i - \hat{Y}_i $ 
        \STATE  $ \mathcal{L}_{\text{acc}} \gets \frac{1}{B} \sum_{i=1}^B \| Y_i - \hat{Y}_i \|_1 $ 
        \STATE  $ \mathcal{L}_{\text{sf}} \gets \mathrm{SF}(\epsilon_1, \dots, \epsilon_B) $ 
        \STATE  $ \mathcal{L} \gets \mathcal{L}_{\text{acc}} + \alpha \cdot \mathcal{L}_{\text{sf}} $ 
        \STATE Update  $ f $  via  $ \nabla_f \mathcal{L} $ 
    \ENDFOR
\ENDFOR

\vspace{0.5em}
\textbf{// Phase 2: Joint training with causal error simulation}
\FOR{epoch =  $ E_w+1 $  \textbf{to}  $ E $ }
    \FOR{each batch of samples constructed from contiguous segments of length  $ 3H $ }
        \STATE // Segment:  $ [y_{t-2H}, \dots, y_{t+H-1}] $ 
        \STATE  $ X_{\text{ctx}} \gets [y_{t-2H}, \dots, y_{t-H-1}] $  \hfill // context input
        \STATE  $ X_{\text{in}} \gets [y_{t-H}, \dots, y_{t-1}] $  \hfill // current input
        \STATE  $ Y_{\text{hist}} \gets [y_{t-H}, \dots, y_{t-1}] $  \hfill // true values for error
        \STATE  $ Y_{\text{tgt}} \gets [y_t, \dots, y_{t+H-1}] $  \hfill // prediction target

        \STATE  $ \hat{Y}_{\text{ctx}} \gets f(X_{\text{ctx}}) $  \hfill // forecast for  $ Y_{\text{hist}} $ 
        \STATE  $ \epsilon \gets Y_{\text{hist}} - \hat{Y}_{\text{ctx}} $ 
        \STATE  $ \hat{Y}_{\text{base}} \gets f(X_{\text{in}}) $  \hfill // base forecast for  $ Y_{\text{tgt}} $ 
        \STATE  $ \Delta \gets g(\epsilon) $ 
        \STATE  $ \hat{Y}_{\text{final}} \gets \hat{Y}_{\text{base}} + \Delta $ 

        \STATE  $ \mathcal{L} \gets\mathcal{L}_{\text{acc}}(Y_{\text{tgt}}, \hat{Y}_{\text{final}} )$ 
        \STATE Update both  $ f $  and  $ g $  via  $ \nabla_{f,g} \mathcal{L} $ 
    \ENDFOR
\ENDFOR
\end{algorithmic}
\end{algorithm}
\subsection{Hyperparameters}
Regarding hyperparameters, for the base forecasting models, we adopt the hyperparameters from their official code repositories. For several datasets, such as AQShunyi, AQWAN, and EURUSD, the official codebases do not provide hyperparameters. Considering that these datasets are similar in size to the ETTm dataset, we use the hyperparameters corresponding to ETTm instead.

For our own method, the warm-up phase consists of three epochs. During the warm-up of the base model, the loss comprises two components: the base forecasting loss and the spectral flatness loss, with equal weights assigned to each. The training configurations, such as the learning rate, remain the same as those used for training the base model. During joint training, we train for 12 epochs with the AdamW optimizer. The remaining configurations, such as learning rate decay and patience, follow the default settings in official code repositories.

As for the error module, a bottleneck layer is first applied to reduce the error vector dimension; we set this dimension to 64.
\subsection{Details of shock and distribution shift in the main te}
Regarding Shock Injection: We uniformly injected 30 shocks across the entire dataset, ensuring consistent intervals between each shock. This ensures that shock events appear within the training, validation, and test sets. Shocks were applied to the normalized data. Each shock starts at its maximum amplitude of 3 and then linearly decays. The amplitude reaches zero at the 196th time step after the shock onset.

Regarding Distribution Shift: For the normalized dataset, let $y_t$
represent the data at time step $t$, and $L$ denote the total length of the dataset. We modified data points where $t>0.5L$ as $y_t=y_t+\frac{4(t-0.5L)}{L}$. This introduces a linear drift, causing a gradually increasing distribution shift throughout the latter half of the data. All experiments under distribution are conducted without RevIN.

\subsection{Computational Overhead.}
TEFL introduces modest computational and memory overhead during training, with negligible cost at inference. 
Specifically, in the joint training phase, each training sample is constructed from a contiguous segment of length  $ 3H $ , which is split into a context window and an input window. This requires doubling the batch size. Consequently, GPU memory consumption increases by a constant factor (typically less than 2 $ \times $ ), but this remains manageable on modern hardware.

Regarding training time, the only additional cost comes from the initial warm-up phase, which typically lasts for a small number of epochs (e.g., 3 in our experiments). The subsequent joint training phase has a per-epoch cost comparable to standard training. Thus, the total training time of TEFL is on the same order of magnitude as that of the standard training.

Critically, during deployment in rolling forecasting scenarios, TEFL incurs \textit{negligible overhead}. The historical residuals  $ \epsilon_{t-H:t-1} $  are naturally available from previous predictions, and the lightweight error module  $ g $  (e.g., a low-rank adapter) adds minimal latency. Therefore, TEFL offers its accuracy and robustness benefits with a highly favorable computation-accuracy trade-off.
\section{Analysis of using longer input window}
\begin{figure}[!h]
    \centering
    \includegraphics[width=0.95\linewidth]{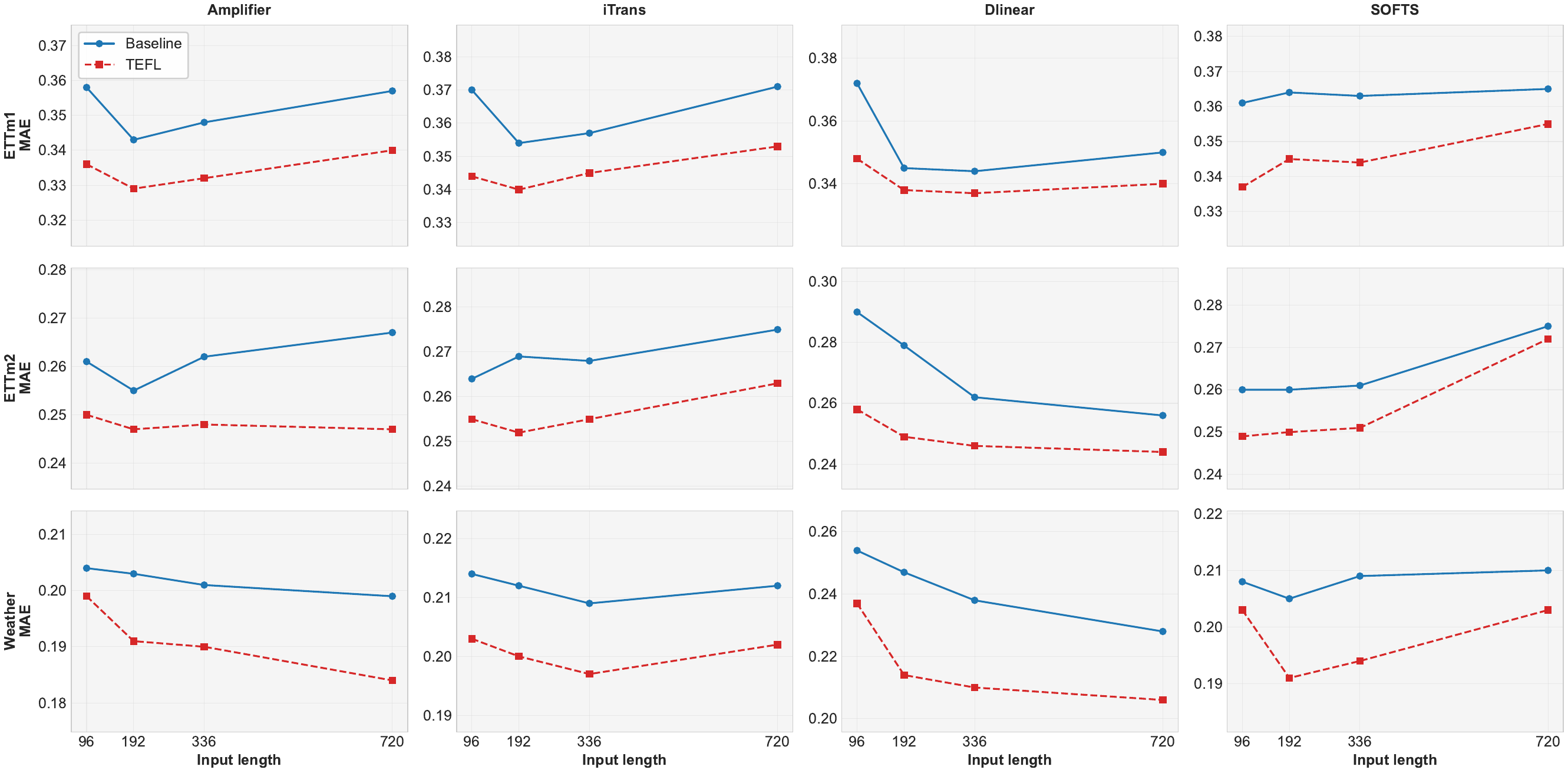}
    \caption{Comparison of Baseline and TEFL Methods for different input window size}
    \label{fig:longer}
\end{figure}
We also conducted experiments to investigate the effects of using longer input lengths on certain datasets. Specifically, for the ETTm1, ETTm2, and Weather datasets, we fixed the prediction horizon at 96 steps and sequentially extended the look-back window length from 96 to 192, 336, and 720. We repeated the experiments under these configurations, training models with both the original method and our TEFL method. The resulting testset MAE for each setting is plotted in Figure \ref{fig:longer}.

First, a key finding is that regardless of the input window length, the accuracy achieved by TEFL method is consistently better than that of the original method. Second, for different datasets and models, there appears to be a different optimal input window length. 

Finally, we would like to highlight that our method can utilize a longer historical context more effectively. For instance, when the input window size is 96, our method is somehow equivalent to extending the input size to 192 steps. This is because TEFL method first uses data from steps $t-192$ to $t-96$ to predict the segment from $t-96$ to $t$. The error from this prediction is then incorporated, along with the original data from $t-96$ to $t$, to generate the final forecast. The results of Figure \ref{fig:longer} show that our method can be viewed as a more efficient way to integrate longer historical information. This is supported by comparing the second blue point (the original method with a 192-step lookback window) with the first red point (our method with a 96-step lookback window) in each subplot of Figure \ref{fig:longer}. The comparison shows that in most cases, directly extending the input window of the original model from 96 to 192 steps does not yield better results than using our method with a 96-step lookback window.

\section{Discussion of training method}
\label{section:method}
\subsection{Discussion of spectral flatness}

Spectral flatness is a commonly used metric to evaluate whether a time series possesses structure. If the spectral flatness is close to 1, it indicates that the time series has little structure and is close to white noise. Therefore, we introduce a regularization term using spectral flatness of the prediction errors at consecutive time steps. This ensures that the errors of the base model exhibit a certain structure over time, allowing the subsequent error module to learn more effectively.

To specifically illustrate the temporal patterns corresponding to different levels of spectral flatness, four time series examples are presented in Figure \ref{fig:sf}. Each row in the figure corresponds to one time series, arranged in descending order of spectral flatness from top to bottom. Each row contains two subplots: the left subplot displays the original time series, while the right subplot shows its frequency-domain energy distribution after Fourier transform.

From top to bottom, the spectral flatness of the time series decreases progressively. The top row represents white noise, which possesses the highest spectral flatness. In contrast, the bottom row corresponds to a pure sine wave, where nearly all the energy is concentrated at a single frequency, resulting in a spectral flatness approaching zero.
This visualization clearly demonstrates the close relationship between spectral flatness and the presence of temporal structure in a signal. Consequently, during pre-training, applying regularization to minimize the spectral flatness of prediction errors effectively preserves a certain degree of temporal structure within the error sequence.
\begin{figure}[!h]
    \centering
    \includegraphics[width=0.95\linewidth]{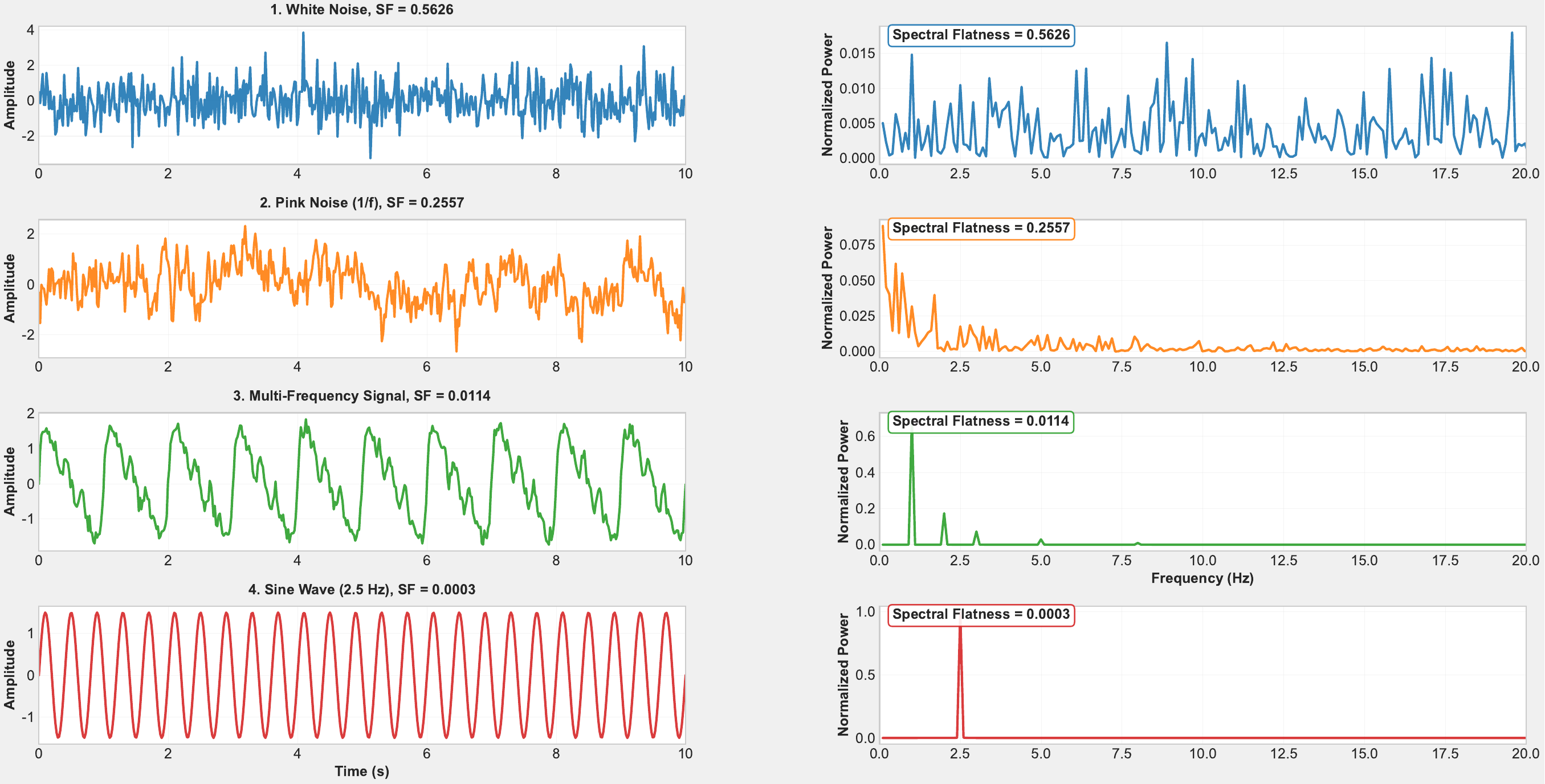}
    \caption{Examples about spectral flatness}
    \label{fig:sf}
\end{figure}
\subsection{Experiments using different training method}
We conducted additional experiments on the ETTm and Weather datasets to compare different training schemes. Specifically, we considered the following three schemes: (1) first training the base forecasting model until convergence, then fixing the base model and training only the error module; (2) jointly training the base model and the error module from scratch; and (3) removing the spectral flatness regularization from our training method. The results of these training methods are presented in Table \ref{tab:train_method}. In this table, the three schemes are labeled as Type 1, Type 2, and w/o SF (without spectral flatness). It can be observed that in most cases our method outperforms the other three variants. Moreover, Type 1 sometimes performs even worse than the original approach that does not incorporate error information. This may be because when the base model is trained to convergence, overfitting has already occurred, leading to a discrepancy of the error distributions between the training and test sets. Consequently, the error module trained solely on training set errors struggles to generalize to the test set. Meanwhile, the results of the w/o SF scheme also provide experimental evidence for the utility of the spectral flatness regularization term.
\begin{table}[!h]
\caption{Results of different training methods}
\label{tab:train_method}
\fontsize{8.5}{8.5}\selectfont
\renewcommand{\arraystretch}{1}
\setlength{\tabcolsep}{4pt}
\centering

\end{table}

\section{Discussion about historical errors}
\label{secction:error_define}
Figure \ref{fig:errors} illustrates three choices of historical errors examined in this study, using the prediction of the next 96 steps as an example. The i-th row of each matrix represents the prediction errors for the next 96 steps at time i. The matrix displays the case at $t=100$. Shaded areas represent unobservable errors, and red areas show the selected historical errors. The first subfigure corresponds to the error selection introduced in the main text. At time $t$, prediction errors for steps 1 to 96 ahead at time $t-96$ are selected. The second subfigure uses  the one-step-ahead prediction errors at times $t-96, t-95, ..., t-1$ as the input for error module. The third subfigure uses the prediction errors for time $t$ at times $t-96, t-95, ..., t-1$.

Intuitively, the choice in subfigure 3 may appear more reasonable. This is because all error calculations use the true value at time $t$, i.e., the most recently observed data, thus minimizing latency. However, what are the actual results?
Table \ref{tab:error_def} reports the experimental results for these three error selections. In the table, TEFL refers to the choice from the main text. Type2 and Type3 correspond to the error selections shown in the second and third subfigure, respectively.

As shown in Table \ref{tab:error_def}, our proposed TEFL method achieves the best performance in most cases. Only in a few long-term forecasting tasks with prediction horizons of 336 and 720 steps does Type3 outperform TEFL.

We can analyze this phenomenon as follows. First, it is important to clarify that our method can be regarded as an approach using historical prediction errors to estimate the current prediction errors. However, all errors defined by Type2 are one-step-ahead prediction errors, whereas the current prediction errors are multi-step errors for the next 1 to 96 steps. The distributions of these two types of errors are clearly different, making it challenging to estimate multi-step errors solely based on one-step errors.

As for Type3, it performs better in a few long-term forecasting cases (e.g., 336 and 720 steps) because it indeed uses the most recently observed data, resulting in lower latency, which is advantageous for long-term predictions. However, its drawback lies in that all errors are calculated based on the true value at time $t$, leading to higher volatility (e.g., when there is an obvious outlier at time $t$). In contrast, TEFL incorporates true values from multiple time steps $(t, t-1, ...)$, increasing the amount of information and reducing the impact of random fluctuations.
\begin{figure}[!h]
    \centering
    \includegraphics[width=0.95\linewidth]{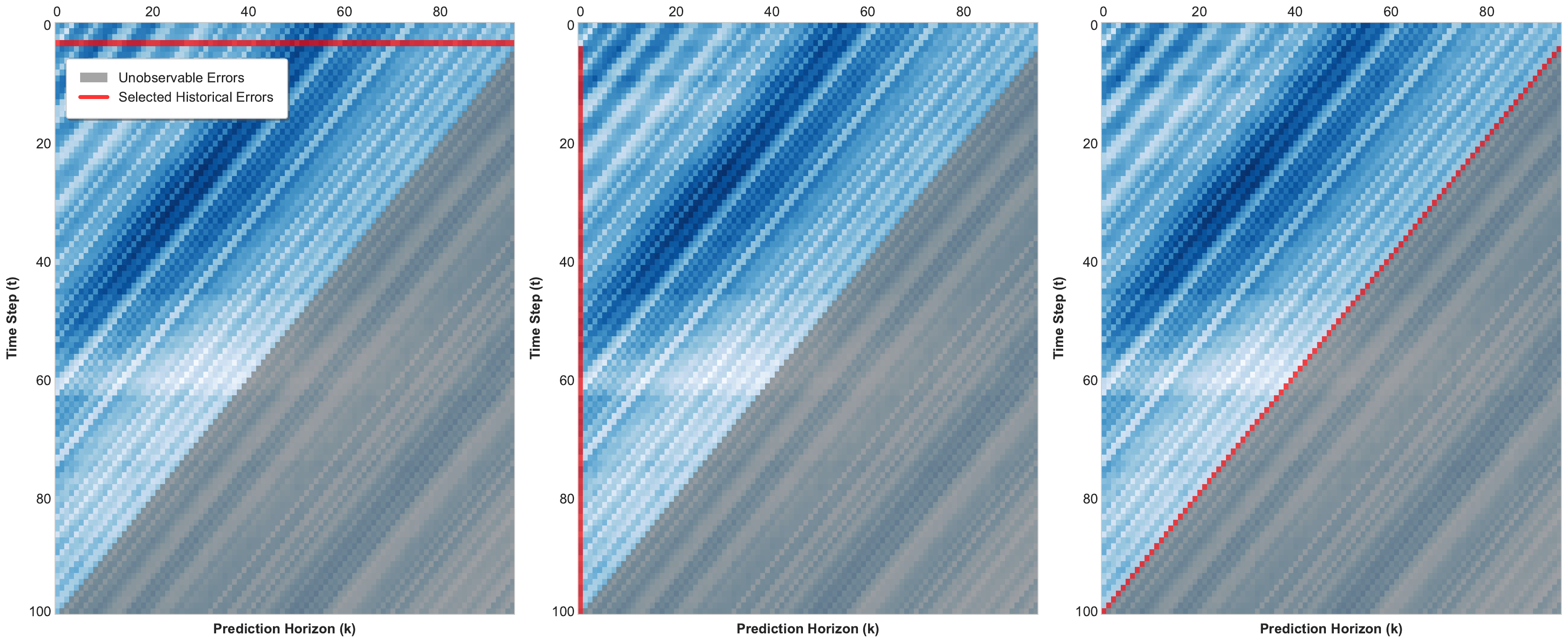}
    \caption{Different choices of historical errors}
    \label{fig:errors}
\end{figure}
\begin{figure}[!h]
    \centering
    \includegraphics[width=0.95\linewidth]{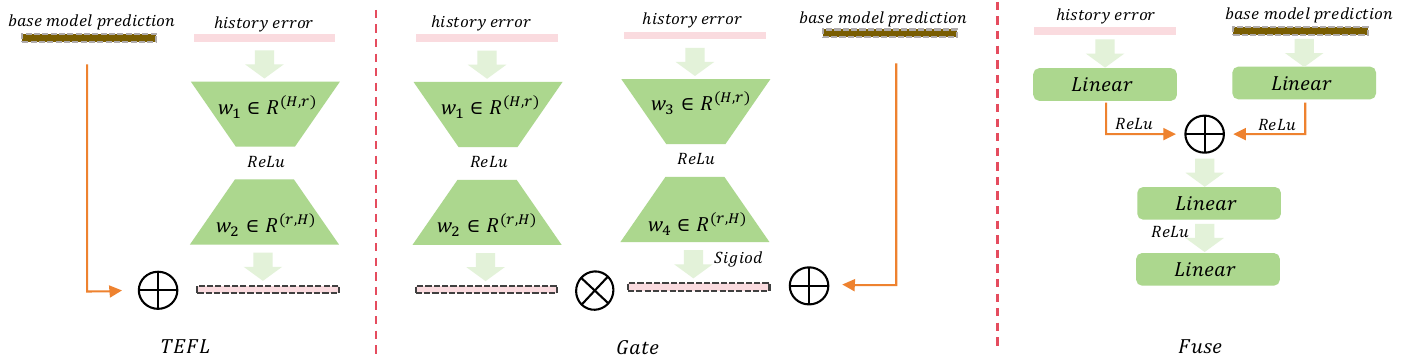}
    \caption{Different error modules}
    \label{fig:error_model}
\end{figure}
\begin{table}[!h]
\caption{Results of different error selection methods}
\label{tab:error_def}
\fontsize{8.5}{8.5}\selectfont
\renewcommand{\arraystretch}{1.1}
\setlength{\tabcolsep}{4pt}
\centering

\end{table}
\section{Discussion about error module}
\label{section:err_model}
Figure \ref{fig:error_model} illustrates three different error module structures explored in this study. The first sub‑figure corresponds to the structure adopted in the main text of the paper. The second and third sub‑figures present two alternative designs. The second module design is called the "Gate" scheme, because, compared with the original error module of TEFL, it introduces a gating mechanism: the historical errors are passed through a feed‑forward network (FFN), multiplied by a confidence score (generated by the another FFN), and then added to the base model’s predictions. This design additionally considers the confidence of error correction; Intuitively, it should learn when to rely more on historical errors to correct the base model’s predictions and when to trust the base model’s predictions more. The design in the third sub‑figure is called the "Fuse" method. It projects both the errors and the base model predictions, merges them, and then passes the result through an FFN to obtain the final output, thereby allowing the interaction between historical errors and the base model predictions to be considered. 

Table \ref{tab:error_module} reports the experimental results of these three schemes on the ETTm and Weather datasets. It can be observed that these two variants do not outperform the original, simplest TEFL method in most cases. Especially for the "Fuse" method, although it possesses strong expressive power in theory, it even performs worse than the baseline method without an error module on the ETTm dataset, which may be caused by overfitting due to its excessive number of parameters.

\begin{table}[!h]
\caption{Results of different error modules}
\label{tab:error_module}
\fontsize{8.5}{8.5}\selectfont
\renewcommand{\arraystretch}{1.1}
\setlength{\tabcolsep}{4pt}
\centering

\end{table}
\section{Reconciling Batch Training with Rolling Forecasting}

As highlighted in the introduction, a fundamental mismatch exists between standard training protocols and real-world deployment: models are typically trained on randomly shuffled segments using batch optimization, whereas in practice they operate in a rolling fashion—making predictions step-by-step (or horizon-by-horizon) while sequentially observing ground truth and accumulating prediction residuals.

One might question why we retain batch training in our method despite acknowledging this mismatch. The reason is pragmatic yet principled: completely abandoning batch training would sacrifice scalability, convergence stability, and compatibility with modern deep architectures. Instead, TEFL seeks to \textit{bridge} the gap by \textit{mimicking} the rolling feedback loop within a batch-compatible framework.

Specifically, during training, we construct each sample as a contiguous subsequence of length   $  3H  $  , where: the first   $  H  $   steps serve as the historical context, the next   $  H  $   steps are used to compute a "simulated" residual history (by applying the base model   $  f  $   to predict them from the prior   $  H  $   steps), and the final   $  H  $   steps constitute the true prediction target.

The error feedback module   $  g  $   then takes the simulated residuals from the middle segment as input to correct the base prediction for the last   $  H  $   steps. Crucially, this simulation is performed \textit{within each sample independently}, without requiring cross-sample sequential dependencies. Thus, samples can be shuffled and processed in parallel, preserving the efficiency of batch training.

This design approximates the rolling scenario: just as a deployed system uses past errors to inform future forecasts, our training samples embed a short “look-back-and-correct” cycle internally. While it does not fully replicate the long-term error propagation seen in true rolling evaluation (where errors can build up over hundreds of steps), it provides sufficient signal for the model to learn how to interpret and utilize recent residuals—a capability that consistently leads to improved performance at test time, as shown in our experiments.

In essence, TEFL does not fully eliminate the train–test mismatch, but it \textit{injects a causal, rolling-like signal into batch training}. This compromise enables both practical training efficiency and meaningful adaptation to recent prediction behavior. This gives us the best of both worlds: the speed and simplicity of batch training, plus the ability to adapt based on recent prediction behavior—something standard batch methods miss, and full online methods achieve only at much higher computational cost.
\section{Full results}
\subsection{Full results of main experiments}
Full results of main experiments are provided in Table \ref{tab:full1} and \ref{tab:full2}.
\begin{table}[]
\caption{Full results I}
\label{tab:full1}
\fontsize{9}{9}\selectfont
\renewcommand{\arraystretch}{1.5}
\setlength{\tabcolsep}{1.3pt}
\centering

\end{table}

\begin{table}[]
\fontsize{9}{9}\selectfont
\renewcommand{\arraystretch}{1.5}
\setlength{\tabcolsep}{1.3pt}
\centering
\caption{Full results II}
\label{tab:full2}
%
\end{table}
\subsection{Full results of datasets with shocks}
The complete experimental results on the shock-injected datasets are reported in Table \ref{tab:full shock}. It can be observed that our TEFL method achieves a significant performance improvement on the dataset with shocks. Furthermore, the extent of this improvement is greater than that observed on the original dataset, which demonstrates the robustness and superior adaptability of TEFL to shock events.
\begin{table}[!h]
\fontsize{9}{9}\selectfont
\renewcommand{\arraystretch}{1.5}
\setlength{\tabcolsep}{1.3pt}
\centering
\caption{Full results for dataset with shocks}
\label{tab:full shock}
\begin{tabular}{c|cccccccccccccccc}
\toprule[1.5pt]
\multicolumn{3}{c}{Model}                                & \multicolumn{2}{c}{Amplifier}                               & \multicolumn{2}{c}{iTrans}                                  & \multicolumn{2}{c}{Dlinear}                                 & \multicolumn{2}{c}{SOFTS}                                   & \multicolumn{2}{c}{TimeFilter}                              & \multicolumn{2}{c}{Avg}                                     & \multicolumn{2}{c}{Imp}                             \\
\multicolumn{3}{c}{Metric}                               & MSE                          & MAE                          & MSE                          & MAE                          & MSE                          & MAE                          & MSE                          & MAE                          & MSE                          & MAE                          & MSE                          & MAE                          & MSE                      & MAE                      \\ \midrule[1pt]
                          &                       & Baseline  & 0.543                        & 0.463                        & 0.653                        & 0.514                        & 0.583                        & 0.490                        & 0.531                        & 0.453                        & 0.497                        & 0.432                        & 0.561                        & 0.470                        &                          &                          \\
                          & \multirow{-2}{*}{96}  & TEFL & {\color[HTML]{FF0000} 0.498} & {\color[HTML]{FF0000} 0.428} & {\color[HTML]{FF0000} 0.573} & {\color[HTML]{FF0000} 0.465} & {\color[HTML]{FF0000} 0.523} & {\color[HTML]{FF0000} 0.442} & {\color[HTML]{FF0000} 0.475} & {\color[HTML]{FF0000} 0.408} & {\color[HTML]{FF0000} 0.459} & {\color[HTML]{FF0000} 0.397} & {\color[HTML]{FF0000} 0.506} & {\color[HTML]{FF0000} 0.428} & \multirow{-2}{*}{9.9\%}  & \multirow{-2}{*}{8.9\%}  \\
                          &                       & Baseline  & 0.541                        & 0.459                        & 0.657                        & 0.512                        & 0.686                        & 0.543                        & 0.648                        & 0.511                        & 0.600                        & 0.487                        & 0.626                        & 0.503                        &                          &                          \\
                          & \multirow{-2}{*}{192} & TEFL & {\color[HTML]{FF0000} 0.490} & {\color[HTML]{FF0000} 0.423} & {\color[HTML]{FF0000} 0.589} & {\color[HTML]{FF0000} 0.472} & {\color[HTML]{FF0000} 0.594} & {\color[HTML]{FF0000} 0.478} & {\color[HTML]{FF0000} 0.594} & {\color[HTML]{FF0000} 0.478} & {\color[HTML]{FF0000} 0.548} & {\color[HTML]{FF0000} 0.447} & {\color[HTML]{FF0000} 0.563} & {\color[HTML]{FF0000} 0.460} & \multirow{-2}{*}{10.1\%} & \multirow{-2}{*}{8.5\%}  \\
                          &                       & Baseline  & 0.706                        & 0.544                        & 0.709                        & 0.542                        & 0.743                        & 0.579                        & 0.715                        & 0.543                        & 0.648                        & 0.520                        & 0.704                        & 0.546                        &                          &                          \\
                          & \multirow{-2}{*}{336} & TEFL & {\color[HTML]{FF0000} 0.632} & {\color[HTML]{FF0000} 0.498} & {\color[HTML]{FF0000} 0.641} & {\color[HTML]{FF0000} 0.504} & {\color[HTML]{FF0000} 0.646} & {\color[HTML]{FF0000} 0.510} & {\color[HTML]{FF0000} 0.660} & {\color[HTML]{FF0000} 0.520} & {\color[HTML]{FF0000} 0.605} & {\color[HTML]{FF0000} 0.481} & {\color[HTML]{FF0000} 0.637} & {\color[HTML]{FF0000} 0.503} & \multirow{-2}{*}{9.5\%}  & \multirow{-2}{*}{7.9\%}  \\
                          &                       & Baseline  & 0.810                        & 0.598                        & 0.798                        & 0.588                        & 0.832                        & 0.629                        & 0.811                        & 0.598                        & 0.754                        & 0.573                        & 0.801                        & 0.597                        &                          &                          \\
\multirow{-8}{*}{ETTm1}   & \multirow{-2}{*}{720} & TEFL & {\color[HTML]{FF0000} 0.698} & {\color[HTML]{FF0000} 0.546} & {\color[HTML]{FF0000} 0.733} & {\color[HTML]{FF0000} 0.552} & {\color[HTML]{FF0000} 0.747} & {\color[HTML]{FF0000} 0.568} & {\color[HTML]{FF0000} 0.742} & {\color[HTML]{FF0000} 0.566} & {\color[HTML]{FF0000} 0.695} & {\color[HTML]{FF0000} 0.531} & {\color[HTML]{FF0000} 0.723} & {\color[HTML]{FF0000} 0.553} & \multirow{-2}{*}{9.7\%}  & \multirow{-2}{*}{7.4\%}  \\ \midrule[0.5pt]
                          &                       & Baseline  & 0.340                        & 0.335                        & 0.326                        & 0.334                        & 0.414                        & 0.430                        & 0.317                        & 0.325                        & 0.315                        & 0.322                        & 0.342                        & 0.349                        &                          &                          \\
                          & \multirow{-2}{*}{96}  & TEFL & {\color[HTML]{FF0000} 0.318} & {\color[HTML]{FF0000} 0.316} & {\color[HTML]{FF0000} 0.311} & {\color[HTML]{FF0000} 0.316} & {\color[HTML]{FF0000} 0.336} & {\color[HTML]{FF0000} 0.343} & {\color[HTML]{FF0000} 0.303} & {\color[HTML]{FF0000} 0.305} & {\color[HTML]{FF0000} 0.302} & {\color[HTML]{FF0000} 0.306} & {\color[HTML]{FF0000} 0.314} & {\color[HTML]{FF0000} 0.317} & \multirow{-2}{*}{8.3\%}  & \multirow{-2}{*}{9.1\%}  \\
                          &                       & Baseline  & 0.492                        & 0.417                        & 0.457                        & 0.407                        & 0.639                        & 0.554                        & 0.460                        & 0.404                        & 0.455                        & 0.400                        & 0.501                        & 0.436                        &                          &                          \\
                          & \multirow{-2}{*}{192} & TEFL & {\color[HTML]{FF0000} 0.455} & {\color[HTML]{FF0000} 0.394} & {\color[HTML]{FF0000} 0.432} & {\color[HTML]{FF0000} 0.383} & {\color[HTML]{FF0000} 0.460} & {\color[HTML]{FF0000} 0.413} & {\color[HTML]{FF0000} 0.427} & {\color[HTML]{FF0000} 0.375} & {\color[HTML]{FF0000} 0.428} & {\color[HTML]{FF0000} 0.375} & {\color[HTML]{FF0000} 0.440} & {\color[HTML]{FF0000} 0.388} & \multirow{-2}{*}{12.1\%} & \multirow{-2}{*}{11.0\%} \\
                          &                       & Baseline  & 0.643                        & 0.493                        & 0.559                        & 0.463                        & 0.823                        & 0.650                        & 0.557                        & 0.457                        & 0.554                        & 0.457                        & 0.627                        & 0.504                        &                          &                          \\
                          & \multirow{-2}{*}{336} & TEFL & {\color[HTML]{FF0000} 0.556} & {\color[HTML]{FF0000} 0.455} & {\color[HTML]{FF0000} 0.520} & {\color[HTML]{FF0000} 0.435} & {\color[HTML]{FF0000} 0.550} & {\color[HTML]{FF0000} 0.474} & {\color[HTML]{FF0000} 0.510} & {\color[HTML]{FF0000} 0.423} & {\color[HTML]{FF0000} 0.512} & {\color[HTML]{FF0000} 0.428} & {\color[HTML]{FF0000} 0.530} & {\color[HTML]{FF0000} 0.443} & \multirow{-2}{*}{15.6\%} & \multirow{-2}{*}{12.0\%} \\
                          &                       & Baseline  & 0.777                        & 0.560                        & 0.701                        & 0.537                        & 1.058                        & 0.757                        & {\color[HTML]{FF0000} 0.692} & {\color[HTML]{FF0000} 0.530} & 0.707                        & 0.536                        & 0.787                        & 0.584                        &                          &                          \\
\multirow{-8}{*}{ETTm2}   & \multirow{-2}{*}{720} & TEFL & {\color[HTML]{FF0000} 0.691} & {\color[HTML]{FF0000} 0.535} & {\color[HTML]{FF0000} 0.657} & {\color[HTML]{FF0000} 0.516} & {\color[HTML]{FF0000} 0.698} & {\color[HTML]{FF0000} 0.568} & 0.643                        & 0.503                        & {\color[HTML]{FF0000} 0.667} & {\color[HTML]{FF0000} 0.520} & {\color[HTML]{FF0000} 0.671} & {\color[HTML]{FF0000} 0.528} & \multirow{-2}{*}{14.7\%} & \multirow{-2}{*}{9.5\%}  \\\midrule[0.5pt]
                          &                       & Baseline  & 0.414                        & 0.334                        & 0.375                        & 0.307                        & 0.449                        & 0.410                        & 0.368                        & 0.299                        & 0.353                        & 0.295                        & 0.392                        & 0.329                        &                          &                          \\
                          & \multirow{-2}{*}{96}  & TEFL & {\color[HTML]{FF0000} 0.380} & {\color[HTML]{FF0000} 0.307} & {\color[HTML]{FF0000} 0.354} & {\color[HTML]{FF0000} 0.279} & {\color[HTML]{FF0000} 0.432} & {\color[HTML]{FF0000} 0.348} & {\color[HTML]{FF0000} 0.348} & {\color[HTML]{FF0000} 0.272} & {\color[HTML]{FF0000} 0.335} & {\color[HTML]{FF0000} 0.267} & {\color[HTML]{FF0000} 0.370} & {\color[HTML]{FF0000} 0.295} & \multirow{-2}{*}{5.6\%}  & \multirow{-2}{*}{10.4\%} \\
                          &                       & Baseline  & 0.568                        & 0.423                        & 0.502                        & 0.386                        & 0.584                        & 0.506                        & 0.500                        & 0.387                        & 0.476                        & 0.375                        & 0.526                        & 0.415                        &                          &                          \\
                          & \multirow{-2}{*}{192} & TEFL & {\color[HTML]{FF0000} 0.508} & {\color[HTML]{FF0000} 0.384} & {\color[HTML]{FF0000} 0.483} & {\color[HTML]{FF0000} 0.365} & {\color[HTML]{FF0000} 0.516} & {\color[HTML]{FF0000} 0.403} & {\color[HTML]{FF0000} 0.466} & {\color[HTML]{FF0000} 0.350} & {\color[HTML]{FF0000} 0.456} & {\color[HTML]{FF0000} 0.345} & {\color[HTML]{FF0000} 0.486} & {\color[HTML]{FF0000} 0.369} & \multirow{-2}{*}{7.7\%}  & \multirow{-2}{*}{11.1\%} \\
                          &                       & Baseline  & 0.687                        & 0.484                        & 0.609                        & 0.447                        & 0.682                        & 0.568                        & 0.608                        & 0.448                        & 0.583                        & 0.438                        & 0.634                        & 0.477                        &                          &                          \\
                          & \multirow{-2}{*}{336} & TEFL & {\color[HTML]{FF0000} 0.600} & {\color[HTML]{FF0000} 0.440} & {\color[HTML]{FF0000} 0.581} & {\color[HTML]{FF0000} 0.426} & {\color[HTML]{FF0000} 0.590} & {\color[HTML]{FF0000} 0.453} & {\color[HTML]{FF0000} 0.556} & {\color[HTML]{FF0000} 0.408} & {\color[HTML]{FF0000} 0.563} & {\color[HTML]{FF0000} 0.417} & {\color[HTML]{FF0000} 0.578} & {\color[HTML]{FF0000} 0.429} & \multirow{-2}{*}{8.8\%}  & \multirow{-2}{*}{10.2\%} \\
                          &                       & Baseline  & 0.815                        & 0.555                        & 0.709                        & 0.506                        & 0.761                        & 0.625                        & 0.702                        & 0.503                        & 0.674                        & 0.493                        & 0.732                        & 0.536                        &                          &                          \\
\multirow{-8}{*}{Weather} & \multirow{-2}{*}{720} & TEFL & {\color[HTML]{FF0000} 0.668} & {\color[HTML]{FF0000} 0.486} & {\color[HTML]{FF0000} 0.669} & {\color[HTML]{FF0000} 0.484} & {\color[HTML]{FF0000} 0.659} & {\color[HTML]{FF0000} 0.511} & {\color[HTML]{FF0000} 0.633} & {\color[HTML]{FF0000} 0.459} & {\color[HTML]{FF0000} 0.626} & {\color[HTML]{FF0000} 0.456} & {\color[HTML]{FF0000} 0.651} & {\color[HTML]{FF0000} 0.479} & \multirow{-2}{*}{11.1\%} & \multirow{-2}{*}{10.7\%}\\ \bottomrule[1.5pt]
\end{tabular}
\end{table}
\subsection{Full results for datasets with distribution}
Regarding the dataset with distribution shift, the complete experimental results are provided in Table \ref{tab:full ds}. From this table, it can be observed that our method achieves a significant improvement in prediction accuracy under distribution shift, demonstrating its strong adaptability to such conditions. This finding aligns with the conclusions presented in the main text.
\begin{table}[!h]
\fontsize{9}{9}\selectfont
\renewcommand{\arraystretch}{1.5}
\setlength{\tabcolsep}{1.3pt}
\centering
\caption{Full results for datasets with distribution shift}

\label{tab:full ds}
\begin{tabular}{c|cccccccccccccccc}
\toprule[1.5pt]
\multicolumn{3}{c}{Model}                                & \multicolumn{2}{c}{Amplifier}                               & \multicolumn{2}{c}{iTrans}                                  & \multicolumn{2}{c}{Dlinear}                                 & \multicolumn{2}{c}{SOFTS}                                   & \multicolumn{2}{c}{TimeFilter}                              & \multicolumn{2}{c}{Avg}                                     & \multicolumn{2}{c}{Imp}                             \\
\multicolumn{3}{c}{Metric}                               & MSE                          & MAE                          & MSE                          & MAE                          & MSE                          & MAE                          & MSE                          & MAE                          & MSE                          & MAE                          & MSE                          & MAE                          & MSE                      & MAE                      \\ \midrule[1pt]
                          &                       & Baseline  & 0.409                        & 0.422                        & 0.456                        & 0.458                        & 0.421                        & 0.453                        & 0.513                        & 0.497                        & 0.535                        & 0.527                        & 0.467                        & 0.471                        &                          &                          \\
                          & \multirow{-2}{*}{96}  & TEFL & {\color[HTML]{FF0000} 0.368} & {\color[HTML]{FF0000} 0.394} & {\color[HTML]{FF0000} 0.441} & {\color[HTML]{FF0000} 0.448} & {\color[HTML]{FF0000} 0.327} & {\color[HTML]{FF0000} 0.358} & {\color[HTML]{FF0000} 0.425} & {\color[HTML]{FF0000} 0.436} & {\color[HTML]{FF0000} 0.512} & {\color[HTML]{FF0000} 0.481} & {\color[HTML]{FF0000} 0.414} & {\color[HTML]{FF0000} 0.423} & \multirow{-2}{*}{11.2\%} & \multirow{-2}{*}{10.2\%} \\
                          &                       & Baseline  & 0.424                        & 0.428                        & 0.493                        & 0.482                        & 0.496                        & 0.505                        & 0.483                        & 0.477                        & 0.558                        & 0.547                        & 0.491                        & 0.488                        &                          &                          \\
                          & \multirow{-2}{*}{192} & TEFL & {\color[HTML]{FF0000} 0.392} & {\color[HTML]{FF0000} 0.401} & {\color[HTML]{FF0000} 0.429} & {\color[HTML]{FF0000} 0.443} & {\color[HTML]{FF0000} 0.393} & {\color[HTML]{FF0000} 0.409} & {\color[HTML]{FF0000} 0.454} & {\color[HTML]{FF0000} 0.452} & {\color[HTML]{FF0000} 0.475} & {\color[HTML]{FF0000} 0.466} & {\color[HTML]{FF0000} 0.429} & {\color[HTML]{FF0000} 0.434} & \multirow{-2}{*}{12.6\%} & \multirow{-2}{*}{11.0\%} \\
                          &                       & Baseline  & 0.477                        & 0.466                        & 0.519                        & 0.500                        & 0.598                        & 0.578                        & 0.573                        & 0.532                        & 0.788                        & 0.684                        & 0.591                        & 0.552                        &                          &                          \\
                          & \multirow{-2}{*}{336} & TEFL & {\color[HTML]{FF0000} 0.438} & {\color[HTML]{FF0000} 0.434} & {\color[HTML]{FF0000} 0.476} & {\color[HTML]{FF0000} 0.476} & {\color[HTML]{FF0000} 0.441} & {\color[HTML]{FF0000} 0.447} & {\color[HTML]{FF0000} 0.473} & {\color[HTML]{FF0000} 0.471} & {\color[HTML]{FF0000} 0.537} & {\color[HTML]{FF0000} 0.512} & {\color[HTML]{FF0000} 0.473} & {\color[HTML]{FF0000} 0.468} & \multirow{-2}{*}{19.9\%} & \multirow{-2}{*}{15.2\%} \\
                          &                       & Baseline  & 0.579                        & 0.519                        & 0.657                        & 0.594                        & 0.840                        & 0.724                        & 0.768                        & 0.650                        & 0.765                        & 0.654                        & 0.722                        & 0.628                        &                          &                          \\
\multirow{-8}{*}{ETTm1}   & \multirow{-2}{*}{720} & TEFL & {\color[HTML]{FF0000} 0.544} & {\color[HTML]{FF0000} 0.486} & {\color[HTML]{FF0000} 0.624} & {\color[HTML]{FF0000} 0.574} & {\color[HTML]{FF0000} 0.572} & {\color[HTML]{FF0000} 0.538} & {\color[HTML]{FF0000} 0.581} & {\color[HTML]{FF0000} 0.535} & {\color[HTML]{FF0000} 0.610} & {\color[HTML]{FF0000} 0.551} & {\color[HTML]{FF0000} 0.586} & {\color[HTML]{FF0000} 0.537} & \multirow{-2}{*}{18.8\%} & \multirow{-2}{*}{14.5\%} \\\midrule[0.5pt]
                          &                       & Baseline  & 0.193                        & 0.292                        & 0.186                        & 0.287                        & 0.194                        & 0.299                        & 0.192                        & 0.295                        & 0.200                        & 0.302                        & 0.193                        & 0.295                        &                          &                          \\
                          & \multirow{-2}{*}{96}  & TEFL & {\color[HTML]{FF0000} 0.181} & {\color[HTML]{FF0000} 0.268} & {\color[HTML]{FF0000} 0.178} & {\color[HTML]{FF0000} 0.271} & {\color[HTML]{FF0000} 0.176} & {\color[HTML]{FF0000} 0.263} & {\color[HTML]{FF0000} 0.185} & {\color[HTML]{FF0000} 0.283} & {\color[HTML]{FF0000} 0.176} & {\color[HTML]{FF0000} 0.265} & {\color[HTML]{FF0000} 0.179} & {\color[HTML]{FF0000} 0.270} & \multirow{-2}{*}{7.2\%}  & \multirow{-2}{*}{8.6\%}  \\
                          &                       & Baseline  & 0.259                        & 0.343                        & 0.253                        & 0.340                        & 0.293                        & 0.387                        & 0.274                        & 0.363                        & 0.309                        & 0.375                        & 0.278                        & 0.362                        &                          &                          \\
                          & \multirow{-2}{*}{192} & TEFL & {\color[HTML]{FF0000} 0.248} & {\color[HTML]{FF0000} 0.311} & {\color[HTML]{FF0000} 0.234} & {\color[HTML]{FF0000} 0.315} & {\color[HTML]{FF0000} 0.237} & {\color[HTML]{FF0000} 0.307} & {\color[HTML]{FF0000} 0.257} & {\color[HTML]{FF0000} 0.340} & {\color[HTML]{FF0000} 0.242} & {\color[HTML]{FF0000} 0.319} & {\color[HTML]{FF0000} 0.243} & {\color[HTML]{FF0000} 0.318} & \multirow{-2}{*}{12.3\%} & \multirow{-2}{*}{12.0\%} \\
                          &                       & Baseline  & 0.347                        & 0.409                        & 0.330                        & 0.400                        & 0.378                        & 0.451                        & 0.404                        & 0.451                        & 0.425                        & 0.455                        & 0.377                        & 0.433                        &                          &                          \\
                          & \multirow{-2}{*}{336} & TEFL & {\color[HTML]{FF0000} 0.298} & {\color[HTML]{FF0000} 0.353} & {\color[HTML]{FF0000} 0.286} & {\color[HTML]{FF0000} 0.352} & {\color[HTML]{FF0000} 0.294} & {\color[HTML]{FF0000} 0.349} & {\color[HTML]{FF0000} 0.304} & {\color[HTML]{FF0000} 0.370} & {\color[HTML]{FF0000} 0.302} & {\color[HTML]{FF0000} 0.367} & {\color[HTML]{FF0000} 0.297} & {\color[HTML]{FF0000} 0.358} & \multirow{-2}{*}{21.3\%} & \multirow{-2}{*}{17.3\%} \\
                          &                       & Baseline  & 0.569                        & 0.523                        & 0.472                        & 0.511                        & 0.566                        & 0.576                        & 0.800                        & 0.628                        & 0.558                        & 0.516                        & 0.593                        & 0.551                        &                          &                          \\
\multirow{-8}{*}{ETTm2}   & \multirow{-2}{*}{720} & TEFL & {\color[HTML]{FF0000} 0.406} & {\color[HTML]{FF0000} 0.429} & {\color[HTML]{FF0000} 0.396} & {\color[HTML]{FF0000} 0.438} & {\color[HTML]{FF0000} 0.399} & {\color[HTML]{FF0000} 0.429} & {\color[HTML]{FF0000} 0.417} & {\color[HTML]{FF0000} 0.466} & {\color[HTML]{FF0000} 0.389} & {\color[HTML]{FF0000} 0.426} & {\color[HTML]{FF0000} 0.401} & {\color[HTML]{FF0000} 0.438} & \multirow{-2}{*}{32.3\%} & \multirow{-2}{*}{20.6\%} \\ \midrule[0.5pt]
                          &                       & Baseline  & 0.281                        & 0.351                        & 0.232                        & 0.291                        & 0.357                        & 0.443                        & 0.239                        & 0.301                        & 0.253                        & 0.326                        & 0.273                        & 0.342                        &                          &                          \\
                          & \multirow{-2}{*}{96}  & TEFL & {\color[HTML]{FF0000} 0.219} & {\color[HTML]{FF0000} 0.296} & {\color[HTML]{FF0000} 0.206} & {\color[HTML]{FF0000} 0.262} & {\color[HTML]{FF0000} 0.217} & {\color[HTML]{FF0000} 0.271} & {\color[HTML]{FF0000} 0.246} & {\color[HTML]{FF0000} 0.308} & {\color[HTML]{FF0000} 0.224} & {\color[HTML]{FF0000} 0.296} & {\color[HTML]{FF0000} 0.222} & {\color[HTML]{FF0000} 0.287} & \multirow{-2}{*}{18.4\%} & \multirow{-2}{*}{16.3\%} \\
                          &                       & Baseline  & 0.362                        & 0.412                        & 0.329                        & 0.367                        & 0.479                        & 0.522                        & 0.417                        & 0.435                        & 0.322                        & 0.388                        & 0.382                        & 0.425                        &                          &                          \\
                          & \multirow{-2}{*}{192} & TEFL & {\color[HTML]{FF0000} 0.237} & {\color[HTML]{FF0000} 0.291} & {\color[HTML]{FF0000} 0.328} & {\color[HTML]{FF0000} 0.382} & {\color[HTML]{FF0000} 0.254} & {\color[HTML]{FF0000} 0.305} & {\color[HTML]{FF0000} 0.397} & {\color[HTML]{FF0000} 0.429} & {\color[HTML]{FF0000} 0.282} & {\color[HTML]{FF0000} 0.346} & {\color[HTML]{FF0000} 0.299} & {\color[HTML]{FF0000} 0.351} & \multirow{-2}{*}{21.6\%} & \multirow{-2}{*}{17.5\%} \\
                          &                       & Baseline  & 0.402                        & 0.438                        & 0.574                        & 0.532                        & 0.604                        & 0.602                        & 0.587                        & 0.532                        & 0.404                        & 0.441                        & 0.514                        & 0.509                        &                          &                          \\
                          & \multirow{-2}{*}{336} & TEFL & {\color[HTML]{FF0000} 0.286} & {\color[HTML]{FF0000} 0.323} & {\color[HTML]{FF0000} 0.460} & {\color[HTML]{FF0000} 0.469} & {\color[HTML]{FF0000} 0.313} & {\color[HTML]{FF0000} 0.362} & {\color[HTML]{FF0000} 0.418} & {\color[HTML]{FF0000} 0.433} & {\color[HTML]{FF0000} 0.365} & {\color[HTML]{FF0000} 0.409} & {\color[HTML]{FF0000} 0.368} & {\color[HTML]{FF0000} 0.399} & \multirow{-2}{*}{28.4\%} & \multirow{-2}{*}{21.6\%} \\
                          &                       & Baseline  & 0.717                        & 0.601                        & 0.666                        & 0.582                        & 0.807                        & 0.717                        & 0.552                        & 0.517                        & 0.578                        & 0.555                        & 0.664                        & 0.594                        &                          &                          \\
\multirow{-8}{*}{Weather} & \multirow{-2}{*}{720} & TEFL & {\color[HTML]{FF0000} 0.400} & {\color[HTML]{FF0000} 0.420} & {\color[HTML]{FF0000} 0.476} & {\color[HTML]{FF0000} 0.468} & {\color[HTML]{FF0000} 0.446} & {\color[HTML]{FF0000} 0.467} & {\color[HTML]{FF0000} 0.572} & {\color[HTML]{FF0000} 0.520} & {\color[HTML]{FF0000} 0.458} & {\color[HTML]{FF0000} 0.475} & {\color[HTML]{FF0000} 0.470} & {\color[HTML]{FF0000} 0.470} & \multirow{-2}{*}{29.2\%} & \multirow{-2}{*}{20.9\%} \\ \bottomrule[1.5pt]
\end{tabular}
\end{table}

\clearpage
\newpage
\end{document}